\documentclass[letterpaper]{article} 
\usepackage{aaai2026}  
\usepackage{times}  
\usepackage{helvet}  
\usepackage{courier}  
\usepackage[hyphens]{url}  
\usepackage{graphicx} 
\usepackage{natbib}  
\usepackage{caption} 
\usepackage{algorithm}
\usepackage{algorithmic}

\usepackage{newfloat}
\usepackage{listings}
\DeclareCaptionStyle{ruled}{labelfont=normalfont,labelsep=colon,strut=off} 
\floatstyle{ruled}
\newfloat{listing}{tb}{lst}{}
\floatname{listing}{Listing}
\nocopyright

\def\equationautorefname~#1\null{Equation~(#1)\null}
\usepackage{url}
\usepackage{bm}
\usepackage{xspace}
\usepackage{algorithmic}
\usepackage{algorithm}
\usepackage{mdwlist}    
\usepackage{paralist} 

\usepackage{adjustbox}
\usepackage{array}
\usepackage{booktabs}
\usepackage{multirow}

\usepackage{array,graphicx}
\usepackage{booktabs}
\usepackage{pifont}
\usepackage{color}

\usepackage{colortbl}
\definecolor{lightgray}{gray}{0.85}
\usepackage{multirow}

\usepackage{fancybox} 
\usepackage{amsmath}
\usepackage{amsfonts}

\usepackage{enumitem}

\newcommand{\alg}[1]{Algorithm~#1}
\newcommand{\step}[1]{Line~#1}
\newcommand{\eq}[1]{Equation~(#1)}
\newcommand{\tabl}[1]{Table~#1}
\newcommand{\fig}[1]{Figure~#1}

\newtheorem{prop}{Proposition}
\newtheorem{prf}{Proof}

\newcommand{\hide}[1]{}

\newcommand{\method}{AnomalyFilter\xspace}

\newcommand{\mts}{\bm{X}} 
\newcommand{\mtsdiff}{\bm{X}} 
\newcommand{\mtsvector}{\bm{x}} 
\newcommand{\mtsstep}[1]{\mtsdiff_{#1}} 

\newcommand{\mtsreconst}{\hat{\mts}} 
\newcommand{\mtsvectorreconst}{\hat{\mtsvector}} 
\newcommand{\mtsstepreconst}[1]{\hat{\mtsdiff}_{#1}} 

\newcommand{\trainingset}{\mathcal{D}^{train}} 
\newcommand{\testset}{\mathcal{D}^{test}} 
\newcommand{\scorefunction}[1]{\mathcal{A}(#1)} 

\newcommand{\timestep}{L} 
\newcommand{\ndim}{K} 
\newcommand{\latentdim}{C} 

\newcommand{\net}{\theta} 

\newcommand{\forwarddiffstep}{T} 
\newcommand{\reversediffstep}{\lambda} 
\newcommand{\diffstep}{t} 

\newcommand{\varschedule}{\beta} 
\newcommand{\varschedulet}[1]{\varschedule_{#1}} 
\newcommand{\varscheduletilde}{\tilde{\varschedule}} 
\newcommand{\varscheduletildet}[1]{\varscheduletilde_{#1}} 

\newcommand{\alphaschedule}{\alpha} 
\newcommand{\alphaschedulet}[1]{\alphaschedule_{#1}} 
\newcommand{\alphaschedulebar}{\bar{\alphaschedule}} 
\newcommand{\alphaschedulebart}[1]{\alphaschedulebar_{#1}} 

\newcommand{\error}{\epsilon} 
\newcommand{\errort}[1]{\error_{#1}} 
\newcommand{\errornetsolo}{\error_{\net}} 
\newcommand{\errornet}[1]{\errornetsolo(#1)} 

\newcommand{\gaussnoise}{\zeta} 

\newcommand{\mask}{\bm{B}} 
\newcommand{\bernoulliparam}{p} 

\newcommand{\mean}{\mu} 
\newcommand{\meannetsolo}{\mean_{\net}} 
\newcommand{\meannet}[1]{\meannetsolo(#1)} 

\newcommand{\lossfunc}{\mathcal{L}} 
\newcommand{\lossfuncmask}{\lossfunc_{Mask}} 
\newcommand{\lossfuncnonmask}{\lossfunc_{NonMask}} 
\newcommand{\lossweight}{c} 
\newcommand{\noiseweight}{e} 

\newcommand{\idmat}{\bm{I}} 

\newcommand{\realnumber}{\mathbb{R}} 

\newcommand{\gauss}[1]{\mathcal{N}(#1)} 
\newcommand{\forward}[1]{q(#1)} 
\newcommand{\conditionforward}[2]{q(#1\:|\:#2)}

\newcommand{\conditionreversenet}[2]{p_{\net}(#1\:|\:#2)}

\newcommand{\expectation}[1]{\mathbb{E}[#1]}
\newcommand{\expectationunder}[2]{\mathbb{E}_{#2}[#1]}

\newcommand{\hadamard}{\circ} 

\newcommand{\smd}{SMD\xspace}

\newcommand{\at}{Anomaly-Transformer\xspace}
\newcommand{\beatgan}{BeatGAN\xspace}
\newcommand{\isf}{IsolationForest\xspace}
\newcommand{\ocsvm}{OCSVM\xspace}

\newcommand{\lstmvae}{LSTM-VAE\xspace}
\newcommand{\tranad}{TranAD\xspace}
\newcommand{\usad}{USAD\xspace}
\newcommand{\imdiffusion}{IMDiffusion\xspace}
\newcommand{\dddr}{D3R\xspace}
\newcommand{\diffad}{DiffAD\xspace}
\newcommand{\dada}{DADA\xspace}
\newcommand{\dae}{DAE\xspace}
\newcommand{\ddpm}{DDPM\xspace}

\title{
Selective Denoising Diffusion Model for Time Series Anomaly Detection
}
\author {
    Kohei Obata\textsuperscript{\rm 1}\thanks{Corresponding author.},
    Zheng Chen\textsuperscript{\rm 1},
    Yasuko Matsubara\textsuperscript{\rm 1},
    Lingwei Zhu\textsuperscript{\rm 2},
    Yasushi Sakurai\textsuperscript{\rm 1}
}
\affiliations {
    \textsuperscript{\rm 1}SANKEN, The University of Osaka, Osaka, Japan\\
    \textsuperscript{\rm 2}Great Bay University, Dongguan, China\\
    \{obata88, chenz, yasuko, yasushi\}@sanken.osaka-u.ac.jp, zhulingwei@gbu.edu.cn
}

\usepackage{bibentry}

\begin{document}

\maketitle

\begin{abstract}
    Time series anomaly detection (TSAD) has been an important area of research for decades,
with reconstruction-based methods, mostly based on generative models, gaining popularity and demonstrating success.
Diffusion models have recently attracted attention due to their advanced generative capabilities.
Existing diffusion-based methods for TSAD rely on a conditional strategy, which reconstructs input instances from white noise with the aid of the conditioner. 
However, this poses challenges in accurately reconstructing the normal parts, resulting in suboptimal detection performance. 
In response, we propose a novel diffusion-based method, named \method, which acts as a selective filter that only denoises anomaly parts in the instance while retaining normal parts.
To build such a filter, we mask Gaussian noise during the training phase and conduct the denoising process without adding noise to the instances.
The synergy of the two simple components greatly enhances the performance of naive diffusion models.
Extensive experiments on five datasets demonstrate that \method achieves notably low reconstruction error on normal parts, providing empirical support for its effectiveness in anomaly detection.
\method represents a pioneering approach that focuses on the noise design of diffusion models specifically tailored for TSAD.  
\end{abstract}

\begin{links}
    \link{Code}{https://github.com/KoheiObata/AnomalyFilter}
\end{links}

\section{Introduction}
    \label{010intro}

\begin{figure*}[t]
    \centering
    \includegraphics[width=1.0\linewidth]{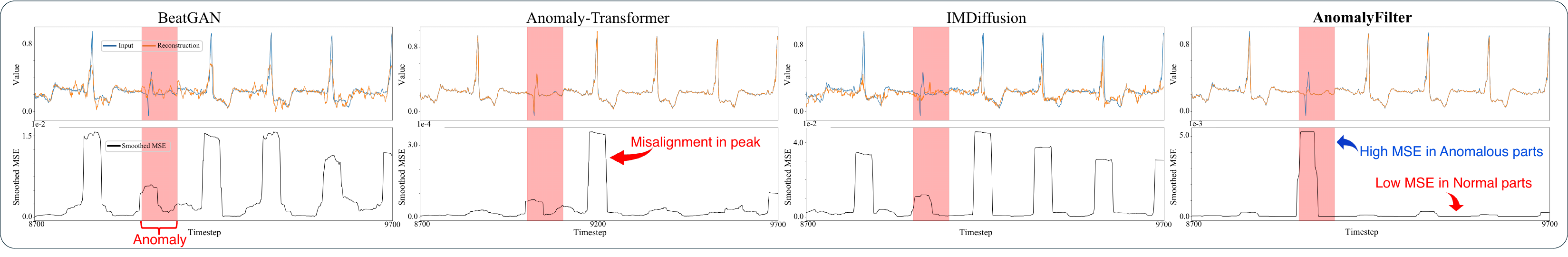} \\
    \caption{
    Reconstruction examples
    of encoder-decoder models (i.e., \beatgan and \at), conditional diffusion models (i.e., \imdiffusion), and proposed \method.
    }
    \label{fig:comparison}
\end{figure*}

Time series anomaly detection (TSAD) is an important task for real-world applications, including robot-assisted systems~\cite{lstmvae,robot}, engines~\cite{engine_lstmencdec}, and cyber-physical system maintenance~\cite{hs-tree}.
The goal of TSAD is to detect unexpected/unusual patterns/values within sequences that are significantly dissimilar to the majority.
Since anomalies are typically rare and labeling them from quantities of data is laborious,
TSAD is generally approached as an unsupervised learning problem in which detection relies on a normality assumption, namely, most observations are normal.

A prevalent approach for TSAD involves reconstruction-based methods. 
These methods learn to reconstruct normal samples during training and assume that normal samples are better reconstructed than anomalous samples during inference.
Then, anomalies can be detected by thresholding the difference between the input and its reconstruction.
Hence, the ideal reconstruction exhibits low reconstruction error on normal parts and high error on anomalous parts.
Many methods, such as those based on GAN~\cite{beatgan,tanogan} and Transformer~\cite{anomalytransformer,tranad}, are categorized as encoder-decoder models because they reconstruct samples from embeddings in latent space.
However, controlling the balance between concise representation (i.e., regularization) and representation power (i.e., reconstruction) remains challenging~\cite{memory}.
This issue is unavoidable when reconstructing from latent space.

Recently, diffusion models~\cite{ddpm} have emerged as a promising approach due to their advanced generative capabilities~\cite{text2image}.
Unlike encoder-decoder models, diffusion models operate directly in data space, gradually reconstructing data from noisy inputs through an iterative denoising process.
Current diffusion models for TSAD primarily focus on the creation of an effective conditional strategy of introducing inductive bias into the denoising process~\cite{diffad, imdiffusion}.
However, reconstructing from white noise proves challenging even with the aid of the conditioner due to the unavailability of original information.
Consequently, while it denoises anomalous parts well and the reconstruction of anomalous parts deviates from the original input, it limits the accurate reconstruction of the normal parts.

Thus, a natural research question arises: \textit{How can we accurately reconstruct the normal parts while exploiting the denoising capability of diffusion models?}
In this work, we propose \method, a new diffusion-based method designed for TSAD, which overcomes the limitation of the existing diffusion-based methods regarding reconstruction quality.
Specifically, we build a selective filter that denoises only the anomalous parts in the sample while retaining the normal parts.
While alternative noise strategies for diffusion models have been explored in the image domain~\cite{anoddpm,nongaussian}, the impact of noise design on time series remains largely underexplored.
Thus, we investigate the utility of \textit{masked Gaussian noise}, which is the noise generated through a combination of Gaussian noise and Bernoulli mask.
During the training, the model learn to predict the noise added to a sample similar to general diffusion models, enabling the model to act as a filter that selectively denoises nonmasked parts (i.e., erases Gaussian noise) while passing through masked parts (i.e., retains input).
Furthermore, we observe that adding noise to the sample during the denoising process merely degrades reconstruction quality, leading to poor detection accuracy.
To address this, we propose \textit{noiseless inference}, which inputs a scaled noiseless sample into the model and conducts the denoising process without any noise, aligning seamlessly with the masked Gaussian noise.
Neither masked Gaussian noise nor noiseless inference alone contributes significantly to improving the performance of the vanilla DDPM.
However, the synergy of the two simple components allows the model to act as an effective filter, achieving substantial performance improvements over the vanilla DDPM.

\subsubsection{Results Overview}
\fig{\ref{fig:comparison}} shows the reconstruction results and reconstruction error (i.e., MSE) of four methods.
Remind that the y-axis units differ across methods.
\beatgan prioritizes concise expression and struggles at reconstructing peaks, resulting in consistently high MSE.
\at, on the other hand, has strong representation power, causing it to accurately reconstruct even the anomalous parts, so-called ``identical shortcut'' issue~\cite{unified}.
The relatively high MSE around 9200 is caused by a slight misalignment in reconstructing the peak.
These results suggest that encoder-decoder models face challenges in balancing concise representation and representation power.
\imdiffusion denoises anomalous parts well but fails to reconstruct normal parts, especially peaks, which demonstrates the limitation of conditional diffusion models.
As a result, these methods fail to detect the anomalous parts.
Our proposed \method shows a significantly higher MSE only in the anomalous parts while performing reconstruction with low MSE for the remaining parts, making anomaly detection easier.

\subsubsection{Contributions}
In summary, this paper makes the following contributions:
\begin{itemize}[left=0pt]
    \item We investigate the limitations of the existing diffusion-based methods for TSAD from the perspective of noise design and empirically demonstrate their impact on reconstruction quality and anomaly detection performance.
    
    \item We introduce \method
    , a novel diffusion-based method designed for TSAD.
    It achieves ideal reconstruction, i.e., a low reconstruction error for normal parts and relatively high reconstruction error for anomalous parts.
    
    \item We show the superiority of \method by comparing it with state-of-the-art baselines on five datasets.
    Notably, by combining two simple components, masked Gaussian noise and noiseless inference, we achieve a $45.1\%$ improvement in VUS-PR compared with vanilla DDPM.
\end{itemize}

\section{Related Work}
    \label{020related}

\subsubsection{Diffusion Model in Time Series}
Diffusion models are a type of generative model that gains the ability to generate diverse samples by corrupting training samples with noise and learning to reverse the process~\cite{survey_ddpm,survey_ddpm3}.
They have attracted increasing attention in the field of AI-generated content, such as text-to-image synthesis~\cite{text2image} and audio synthesis~\cite{diffwave,wavegrad}.
As regards time series, conditional diffusion models have been applied to various time series tasks such as imputation~\cite{midm,sasdim,lscd} and forecasting~\cite{sssd,ldt,tmdm,armd}.
They utilize diffusion models conditioned on observed values to impute/predict unseen values, achieving good success.
For instance, CSDI~\cite{csdi} is designed for time series imputation, and it employs feature and temporal attention sequentially to learn the noise at each step while introducing conditions in the denoising process.
TimeGrad~\cite{timegrad} is a conditional diffusion model that predicts in an autoregressive manner, with the denoising process guided by the hidden state of a recurrent neural network. 
Additionally, diffusion models have been employed for time series data augmentation~\cite{fts-diffusion,timeweaver,tsdiff,padts}.

\subsubsection{Diffusion Model for Time Series Anomaly Detection}
TSAD differs from the aforementioned tasks in that all values are known.
Overly powerful conditioners risk carrying anomaly information, which can lead to the accurate reconstruction of even anomalous parts.
Therefore, the current mainstream focus is on a unique conditional strategy~\cite{diffusionae,d3r,modem}.
For example, \imdiffusion~\cite{imdiffusion} is inspired by imputation techniques, where part of the sample is masked, and the unmasked parts serve as a condition to impute the masked parts.
\diffad~\cite{diffad} tackles the case where anomalous points concentrate in certain parts, which is called anomaly concentration, and incorporates a density ratio-based point selection strategy for masking anomalous parts.
However, since they rely on partially observed input as a condition, which often loses information needed for reconstructing normal parts, their reconstruction errors are relatively high, resulting in a failure to detect anomalies that differ slightly from normalities (e.g., pattern-wise outliers).

\subsubsection{Denoising Model for Anomaly Detection}

Denoising models, including diffusion models, perform anomaly detection by taking corrupted inputs, learning to denoise them, and generating clean reconstructions.
Similar to diffusion models, a denoising autoencoder (DAE)~\cite{dae} adds noise (typically Gaussian) to the input but differs in that it directly predicts the clean input rather than using a stepwise denoising process.
The selection of noise is critical in denoising models.
Recent studies have explored deterministic noise~\cite{colddiffusion,softdiffusion}, and non-Gaussian noise~\cite{nongaussian} for diffusion models.
In image anomaly detection, \cite{roleofnoise} shows that DAE trained with coarse noise outperforms diffusion models at Brain MRI, and AnoDDPM~\cite{anoddpm} demonstrates the effectiveness of Simplex noise.
Despite these advancements in the image domain, where noise has been extensively studied and often operates on 2D data, noise tailored for time series remains largely unexplored.
This highlights a significant gap in understanding the impact of noise design in TSAD applications.

\section{Preliminaries}
    \label{030preliminary}
    
\subsection{Problem Definition} \label{sec:problem}
In this paper, we focus on unsupervised anomaly detection for time series.
We adopt a local contextual window to model their temporal dependency, which is a common practice with most deep TSAD methods.
Consider an instance $\mts = \{ \mtsvector_1,\mtsvector_2,\dots,\mtsvector_\timestep \} \in \realnumber^{\ndim \times \timestep}$ to be a multivariate time series with $\ndim$ features and $\timestep$ timesteps.
The objective of TSAD is to identify anomalous timesteps in a test set $\testset$ by training the model in an unlabeled training set $\trainingset$ assumed to contain mostly normal instances.
Thus, the model aims to assign an anomaly score to each timestep, where higher scores are more anomalous.
In a reconstruction-based approach, the model outputs a reconstruction $\mtsreconst$ of the given instance $\mts$, and the anomaly score is calculated by comparing them by a score function $\scorefunction{\mtsvector_{l}, \mtsvectorreconst_{l}} : \realnumber^{\ndim} \to \realnumber$ (for simplicity, we consider the MSE score).
To solve TSAD efficiently, we need a powerful method that achieves ideal reconstruction, i.e., has a low reconstruction error on the normal parts and a high reconstruction error on the anomalous parts of the instance.

\subsection{Denoising Diffusion Probablistic Model} \label{sec:ddpm} 
Our \method is based on the denoising diffusion probabilistic model (DDPM)~\cite{ddpm}, a well-known diffusion model.
DDPM consists of a forward diffusion process and a reverse denoising process.
The forward diffusion process
$\conditionforward{\mtsstep{\diffstep}}{\mtsstep{\diffstep-1}}$
gradually corrupts data from some target distribution
$\forward{\mtsstep{0}}$
into a normal distribution,
where $\mtsstep{\diffstep}$ denotes corrupted data indexed by a diffusion step $\diffstep$.
Thus, in practice, $\mtsstep{0}$ is $\mts$.
Then, a learned reverse process
$\conditionreversenet{\mtsstep{\diffstep-1}}{\mtsstep{\diffstep}}$
generates data by turning noise into data from 
$\forward{\mtsstep{0}}$.
%

\subsubsection{Forward Process}
This process can be formally described as Markovian:
\begin{align}
    \conditionforward{\mtsstep{\diffstep}}{\mtsstep{\diffstep-1}} =
    \gauss{\mtsstep{\diffstep} ; \sqrt{1-\varschedulet{\diffstep}} \mtsstep{\diffstep-1} , \varschedulet{\diffstep}\idmat}, \label{eq:forwardprocess}
\end{align}
for $\diffstep=\{ 1,\dots,\forwarddiffstep \}$, where $\forwarddiffstep$ is a forward diffusion step.
The variance schedule $\varschedulet{\diffstep} \in (0,1)$ is a hyperparameter defined by a monotonically increasing sequence $\varschedulet{1} < \varschedulet{2} < \dots < \varschedulet{\forwarddiffstep}$ such that the data become sequentially corrupted.
%

To sample from an arbitrary $\diffstep$,
namely without having to find intermediate steps $\mtsstep{\diffstep-1}, \dots, \mtsstep{1}$,
the variance schedule is simplified to 
$\alphaschedulet{\diffstep} = 1-\varschedulet{\diffstep}$
and 
$\alphaschedulebart{\diffstep} = \prod_{i=1}^{\forwarddiffstep} \alphaschedulet{i}$, as follows:
\begin{align}
    \conditionforward{\mtsstep{\diffstep}}{\mtsstep{0}} &=
    \gauss{\mtsstep{\diffstep} ; \sqrt{\alphaschedulebart{\diffstep}}\mtsstep{0} , (1-\alphaschedulebart{\diffstep})\idmat},
    \label{eq:fastforward} \\
    \mtsstep{\diffstep} &= 
    \alphaschedulebart{\diffstep}\mtsstep{0} + 
    \sqrt{1-\alphaschedulebart{\diffstep}}\errort{\diffstep},
    \label{eq:fastforwardx}
\end{align}
where $\errort{\diffstep} \sim \gauss{\mathbf{0}, \idmat}$ is the same size as $\mtsstep{0}$.
%

\subsubsection{Reverse Process}
The generative model, parameterized by $\net$, is a learned reverse process to denoise 
$\mtsstep{\forwarddiffstep}$ and reconstruct $\mtsstep{0}$.
This is accomplished by iteratively computing the following Gaussian transitions:
\begin{align}
    \conditionreversenet{\mtsstep{\diffstep-1}}{\mtsstep{\diffstep}} = 
    \gauss{\mtsstep{\diffstep-1} ; \meannet{\mtsstep{\diffstep},\diffstep} , \varscheduletildet{\diffstep}\idmat}, \label{eq:reverseprocess}
\end{align}
for $\diffstep=\{ \forwarddiffstep,\dots,1 \}$,
where $\varscheduletildet{\diffstep} = \frac{1-\alphaschedulebart{\diffstep-1}}{1-\alphaschedulebart{\diffstep}}$.
$\meannetsolo$ can, for example, be implemented with U-net-like architectures~\cite{unet},
and it can be learned by setting
\begin{align}
    \meannet{\mtsstep{\diffstep} , \diffstep} = 
    \frac{1}{\alphaschedulet{\diffstep}}
    \big(
        \mtsstep{\diffstep} - 
        \frac{\varschedulet{\diffstep}}{\sqrt{1-\alphaschedulebart{\diffstep}}} \errornet{\mtsstep{\diffstep} , \diffstep}
    \big).
    \label{eq:meannetwork}
\end{align}

\subsubsection{Objective Function}
\cite{ddpm} shows that optimizing the following simplified training objective leads to better generation quality: 
\begin{align}
    \lossfunc = 
    \expectationunder{\lVert \errort{\diffstep} - \errornet{\mtsstep{\diffstep} , \diffstep} \rVert}{\mtsstep{0} \sim \forward{\mtsstep{0}}, \errort{\diffstep} \sim \gauss{\mathbf{0}, \idmat}, \diffstep}, 
    \label{eq:loss}
\end{align}
where $\errort{\diffstep}$ is the noise used to obtain $\mtsstep{\diffstep}$ from $\mtsstep{0}$ in \eq{\ref{eq:fastforwardx}} at step $\diffstep$,
and $\lVert \cdot \rVert$ denotes the Frobenius norm.

\begin{algorithm}[t]
    \caption{Naive Training}
    \label{alg:training}
    \begin{algorithmic}[1]
        \STATE {\bf Input:}
        Training set $\trainingset$,
        Forward diffusion step $\forwarddiffstep$;
        %
        \REPEAT
            \STATE Sample data
            $\mtsstep{0} \sim \trainingset$; \label{step:trainst}
            \STATE Sample diffusion step
            $\diffstep \sim Uniform(\{ 1, \dots, \forwarddiffstep \})$;
            \STATE Sample noise
            $\errort{\diffstep} \sim \gauss{\mathbf{0}, \idmat}$;
            \STATE Corrupt 
            $\mtsstep{\diffstep} = 
            \alphaschedulebart{\diffstep}\mtsstep{0} + 
            \sqrt{1-\alphaschedulebart{\diffstep}}\errort{\diffstep}$;
            \STATE Calculate
            $\lossfunc =
            \lVert \errort{\diffstep} - \errornet{\mtsstep{\diffstep} ,\diffstep} \rVert$; \label{step:trained}
        \UNTIL{Converged;}
    \end{algorithmic}
    \normalsize
\end{algorithm}

\begin{algorithm}[t]
    \caption{Naive Inference}
    \label{alg:naiveinference}
    \begin{algorithmic}[1]
        \STATE {\bf Input:}
        Data $\mtsstep{0} \in \testset$,
        Reverse diffusion step $\reversediffstep$,
        Trained denoising function $\errornetsolo$;
        \STATE {\bf Output:}
        Reconstruction $\mtsstepreconst{0}$ (i.e., $\mtsreconst$);
        \STATE Corrupt $\mtsstepreconst{\reversediffstep} =
        \alphaschedulebart{\reversediffstep}\mtsstep{0} +
        \sqrt{1-\alphaschedulebart{\reversediffstep}}\errort{\reversediffstep}$; \label{step:naiveinfcorrupt}
        \FOR{$\diffstep=\reversediffstep:1$}
            \STATE Sample
            $\errort{\diffstep} \sim \gauss{\mathbf{0}, \idmat}$ if $\diffstep > 1$ else $\errort{\diffstep} = 0$;
            \STATE Denoise
            $\mtsstepreconst{\diffstep-1} =
            \frac{1}{\sqrt{\alphaschedulet{\diffstep}}}
            \big(
            \mtsstepreconst{\diffstep} - 
            \frac{\varschedulet{\diffstep}}{\sqrt{1-\alphaschedulebart{\diffstep}}} \errornet{\mtsstepreconst{\diffstep} , \diffstep}
            \big) +
            \varscheduletildet{\diffstep}\errort{\diffstep}$; \label{step:naiveinfdenoise}
        \vspace{-1em}
        \ENDFOR
        \STATE \textbf{return} $\mtsstepreconst{0}$;
    \end{algorithmic}
    \normalsize
\end{algorithm}

\subsection{Anomaly Detection with Diffusion Model} \label{sec:ddpm4ad} 
Here, we focus on general anomaly detection tasks that are not restricted to time series.
The diffusion model is a reconstruction-based approach.
Thus, hereafter, we describe the way in which the diffusion model reconstructs $\mtsreconst$.

\subsubsection{Training Phase}
During the training phase, the model $\errornetsolo$ learns to predict the noise $\errort{\diffstep}$ added to $\mtsstep{0} \in \trainingset$ given $\mtsstep{\diffstep}$ and $\diffstep$.
The training procedure is shown as \alg{\ref{alg:training}}.
In each iteration of the training (\step{\ref{step:trainst}}-\ref{step:trained}),
a diffusion step $\diffstep$ is randomly sampled, at which point the diffusion process \eq{\ref{eq:fastforwardx}} is applied.
Unlike the other tasks, in anomaly detection, $\mtsstep{\forwarddiffstep}$ does not need to be white noise.
The loss \eq{\ref{eq:loss}} is calculated afterward.

\subsubsection{Inference Phase}
In the inference phase, the model $\errornetsolo$ generates a reconstruction $\mtsreconst$ given the input instance $\mtsstep{0} \in \testset$.
The inference procedure is shown in \alg{\ref{alg:naiveinference}}.
It first corrupts $\mtsstep{0}$ into $\mtsstepreconst{\reversediffstep}$ (\step{\ref{step:naiveinfcorrupt}}):
\begin{align}
    \mtsstepreconst{\reversediffstep} =
    \alphaschedulebart{\reversediffstep}\mtsstep{0} +
    \sqrt{1-\alphaschedulebart{\reversediffstep}}\errort{\reversediffstep}, \label{eq:naive_corrupt}
\end{align}
where $\reversediffstep$ is a reverse diffusion step that can be shorter than $\forwarddiffstep$.
In general, relatively weak noise is added to form $\mtsstepreconst{\reversediffstep}$ so that it retains some of the input features~\cite{anoddpm,diffusionae}.
This can be seen as a conditional DDPM on the input instance.
Then, in each step (\step{\ref{step:naiveinfdenoise}}), $\mtsstepreconst{\reversediffstep}$ is gradually denoised to reconstruct $\mtsreconst$.
\begin{align}
    \mtsstepreconst{\diffstep-1} =
    \frac{1}{\sqrt{\alphaschedulet{\diffstep}}}
    \big(
    \mtsstepreconst{\diffstep} - 
    \frac{\varschedulet{\diffstep}}{\sqrt{1-\alphaschedulebart{\diffstep}}} \errornet{\mtsstepreconst{\diffstep} , \diffstep}
    \big) +
    \varscheduletildet{\diffstep}\errort{\diffstep}. \label{eq:naive_inference}
\end{align}

\begin{figure*}[t]
    \centering
    \includegraphics[width=1\linewidth]{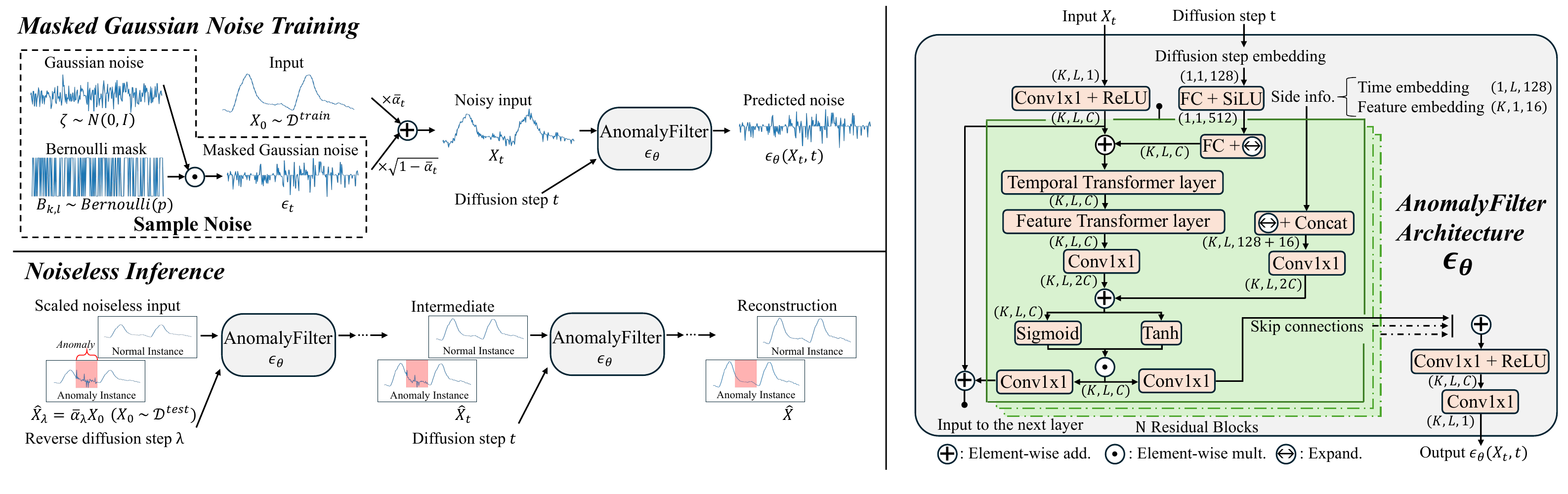} \\
    \caption{
        An overview of \method framework.
        The model is trained to predict masked Gaussian noise, with the aim of learning to denoise the anomaly parts (i.e., nonmasked parts) and retaining the normal parts (i.e., masked parts).
        Noiseless inference operates without noise at the beginning and during the denoising steps.
        Thus, when given a normal instance, it outputs an instance with no change.
        When given an anomaly instance, only anomalous parts are removed.
        \method captures temporal and inter-variable dependencies by temporal and feature transformer layers and predicts noise added to the input.
        }
    \label{fig:overview}
\end{figure*}

\section{Proposed \method}
    \label{040model}
    In this section, we propose \method, a novel diffusion-based method designed for TSAD, which works as a selective filter that removes only the anomalous parts while retaining the normal parts.
We begin by providing an overview of \method and its architecture.
Subsequently, we look at how to achieve such a filter.

\subsubsection{Overview}
An overview of the \method framework is in \fig{\ref{fig:overview}} (left).
It consists of \textit{masked Gaussian noise training} and \textit{noiseless inference}.
Masked Gaussian noise makes the model learn to allow normal parts to pass through selectively by masking the Gaussian noise.
To address the reconstruction quality issue, we propose noiseless inference, which eliminates the addition of noise during the inference.
This aligns with the learned masked Gaussian noise and achieves the accurate reconstruction of normal parts.

\subsubsection{Architecture}
The architecture of \method is shown in \fig{\ref{fig:overview}} (right) and is based on CSDI~\cite{csdi}, which is a modification of DiffWave~\cite{diffwave} employing transformer layers.
It is composed of multiple residual layers with a latent dimension $\latentdim$.
Diffusion step $\diffstep$ is embedded as a 128-dimensional encoding vector with positional encoding~\cite{transformer} and input into the model together with data $\mtsstep{\diffstep}$.
To incorporate temporal and inter-variable dependencies beyond $\timestep$ timesteps and $\ndim$ features, we employ a temporal and feature transformer layer. 
The positional encoding of time and feature is added afterward.
As is typical for diffusion models, the output dimensions match the input dimensions.

\subsection{Masked Gaussian Noise} \label{sec:maskedgaussian}
Our objective is to build a filter with two functionalities: (1) removing the anomalous parts and (2) allowing the normal parts to pass through.
We design masked Gaussian noise for this purpose.
Specifically, we represent anomalous parts with Gaussian noise, as in typical DDPM.
To enable the model to allow normal parts to pass through, we randomly mask the noise.
The stochastic selection of the noise to mask enables the filter to selectively denoise anomalous parts.
Formally, our masked Gaussian noise $\errort{\diffstep}$ is defined as follows:
\begin{align}
    \errort{\diffstep} = \mask\hadamard\gaussnoise, \label{eq:maskednoise}
\end{align}
where $\hadamard$ is element-wise multiplication,
$\gaussnoise \sim \gauss{\mathbf{0}, \idmat}$ is Gaussian noise,
and $\mask$ is a mask 
with elements drawn from a Bernoulli distribution,
$\mask_{k,l} \sim Bernoulli(\bernoulliparam)$.
The hyperparameter $\bernoulliparam$ controls the mask ratio; the larger the value, the larger the ratio of Gaussian noise.

\begin{algorithm}[t]
    \caption{Masked Gaussian Noise Training}
    \label{alg:maskedtraining}
    \begin{algorithmic}[1]
        \STATE {\bf Input:}
        Training set $\trainingset$,
        Forward diffusion step $\forwarddiffstep$, \\
        \textcolor{cyan}{Mask ratio $\bernoulliparam$};
        \REPEAT
             \STATE Sample data
            $\mtsstep{0} \sim \trainingset$;
            \STATE Sample diffusion step
            $\diffstep \sim Uniform(\{ 1, \dots, \forwarddiffstep \})$;
            \STATE \textcolor{cyan}{Sample noise
            $\errort{\diffstep} = \mask\hadamard\gaussnoise$ , \\
            where $\gaussnoise \sim \gauss{\mathbf{0}, \idmat}, \mask_{k,l} \sim Bernoulli(\bernoulliparam)$}; \label{step:masktrainsamplenoise}
            \STATE Corrupt 
            $\mtsstep{\diffstep} = 
            \alphaschedulebart{\diffstep}\mtsstep{0} + 
            \sqrt{1-\alphaschedulebart{\diffstep}}\errort{\diffstep}$;
            \STATE Calculate
            $\lossfunc =
            \lVert \errort{\diffstep} - \errornet{\mtsstep{\diffstep} ,\diffstep} \rVert$;
        \UNTIL{Converged;}
    \end{algorithmic}
    \normalsize
\end{algorithm}

\subsection{Masked Gaussian Noise Training} \label{sec:training}
The simplified DDPM loss function \eq{\ref{eq:loss}} can be interpreted as training the model to predict the noise added to the data.
Similarly, we train our model $\errornetsolo$ to predict the masked Gaussian noise $\errort{\diffstep}$.

\subsubsection{Algorithm}
The training procedure for \method is summarized in \alg{\ref{alg:maskedtraining}}, where the differences from the conventional algorithm (\alg{\ref{alg:training}}) are highlighted in \textcolor{cyan}{cyan}.
We follow the conventional training procedure for the diffusion model except for the noise $\errort{\diffstep}$ (\step{\ref{step:masktrainsamplenoise}}).
Specifically, given data $\mtsstep{0}$, we sample noisy data $\mtsstep{\diffstep}$ and train $\errornetsolo$ by minimizing the loss function in \eq{\ref{eq:loss}}.

\subsubsection{Objective Function}
Although we use a loss function similar to that of DDPM, with the introduction of masked Gaussian noise, our loss function can be divided into two parts.
\small
\begin{align}
    \lossfunc 
    &= \expectation{\lVert \errort{\diffstep} - \errornet{\mtsstep{\diffstep} , \diffstep} \rVert} \\ \nonumber
    &= \underbrace{ \expectation{\mask\hadamard \lVert \gaussnoise - \errornet{\mtsstep{\diffstep} ,\diffstep} \rVert} }_{\text{Noisy part}}
    + \underbrace{ \expectation{(\bm{1}-\mask)\hadamard \lVert \bm{0} - \errornet{\mtsstep{\diffstep},\diffstep} \rVert} }_{\text{Noiseless part}} \\ \nonumber
    &= \lossfuncnonmask + \lossfuncmask .
\end{align}
\normalsize
The derivation is provided in the Appendix.
Thus, this loss function enables the model to learn a filter with two functionalities: (1) the removal of anomalous parts ($\lossfuncnonmask$) and (2) the retention of normal parts ($\lossfuncmask$).

\begin{algorithm}[t]
    \caption{Noiseless Inference}
    \label{alg:noiselessinference}
    \begin{algorithmic}[1]
        \STATE {\bf Input:}
        Data $\mtsstep{0} \in \testset$,
        Reverse diffusion step $\reversediffstep$,
        Trained denoising function $\errornetsolo$;
        \STATE {\bf Output:}
        Reconstruction $\mtsstepreconst{0}$ (i.e., $\mtsreconst$);
        \STATE \textcolor{cyan}{Scale $\mtsstepreconst{\reversediffstep} = \alphaschedulebart{\reversediffstep}\mtsstep{0}$} ; \label{step:noiselessscale}
        \FOR{$\diffstep=\reversediffstep:1$}
            \STATE \textcolor{cyan}{Denoise
            $\mtsstepreconst{\diffstep-1} =
            \frac{1}{\sqrt{\alphaschedulet{\diffstep}}}
            \big(
            \mtsstepreconst{\diffstep} - 
            \frac{\varschedulet{\diffstep}}{\sqrt{1-\alphaschedulebart{\diffstep}}} \errornet{\mtsstepreconst{\diffstep} , \diffstep}
            \big)$}; \label{step:noiselessdenoise}
        \ENDFOR
        \STATE \textbf{return} $\mtsstepreconst{0}$;
    \end{algorithmic}
    \normalsize
\end{algorithm}
\subsection{Noiseless Inference} \label{sec:noiseless}

To achieve low reconstruction errors for normal parts, we introduce noiseless inference, which inputs the data as they are and does not add noise at each denoising step.
\alg{\ref{alg:noiselessinference}} is our proposed noiseless inference.
Instead of inputting corrupted data
$\mtsstepreconst{\reversediffstep} = \alphaschedulebart{\reversediffstep}\mtsstep{0} + \sqrt{1-\alphaschedulebart{\reversediffstep}}\errort{\reversediffstep}$
into the model,
we input the following scaled noiseless data (\step{\ref{step:noiselessscale}}):
\begin{align}
    \mtsstepreconst{\reversediffstep} = \alphaschedulebart{\reversediffstep}\mtsstep{0}. \label{eq:noiselesscorruption}
\end{align}
Then, at each denoising step, $\mtsstep{\reversediffstep}$ is gradually denoised to reconstruct $\mtsreconst$. 
This step is conducted without any addition of noise (\step{\ref{step:noiselessdenoise}}):
\begin{align}
    \mtsstepreconst{\diffstep-1} =
            \frac{1}{\sqrt{\alphaschedulet{\diffstep}}}
            \big(
            \mtsstepreconst{\diffstep} - 
            \frac{\varschedulet{\diffstep}}{\sqrt{1-\alphaschedulebart{\diffstep}}} \errornet{\mtsstepreconst{\diffstep} , \diffstep}
            \big). \label{eq:noiselessdenoise}
\end{align}

Thanks to the masked Gaussian noise, the model functions as a filter.
Thus, noiseless inference presumes that the model outputs data with no change when given normal data and outputs data with the anomalous parts removed when given anomaly data.
The key advantage of noiseless inference over naive inference (\alg{\ref{alg:naiveinference}}) is that it prevents the loss of important features of the original input, which occurs by adding noise.
This allows the input to act as a condition for reconstruction, leading to an accurate reconstruction of the normal parts.

\section{Experiments}
    \label{050experiments}
    \subsection{Experiment Setup}

\subsubsection{Datasets} 
We use five datasets in our paper:
(1) UCR anomaly archive (UCR)~\cite{current},
(2) AIOps,
(3, 4) Yahoo real and Yahoo bench,
and (5) Server Machine Dataset (\smd)~\cite{omnianomaly}.
We trained and tested separately for each of the subdatasets, and the average results of five runs are presented.

\begin{table*}[t]
    \centering
    \resizebox{1.0\linewidth}{!}{
  \begin{tabular}{l|llll|lll|lll|lll|lll}
  \toprule
   & \multicolumn{4}{c|}{UCR} & \multicolumn{3}{c|}{AIOps} & \multicolumn{3}{c|}{Yahoo real} & \multicolumn{3}{c|}{Yahoo bench} & \multicolumn{3}{c}{SMD} \\
   & V-R & V-P & RF & Acc. & V-R & V-P & RF & V-R & V-P & RF & V-R & V-P & RF & V-R & V-P & RF \\
  \midrule
\isf&0.641&0.106&0.041&0.236&0.798&0.169&0.061&0.699&0.358&0.000&0.611&0.295&0.000&0.735&0.245&0.093\\
\ocsvm&0.645&0.167&0.092&0.300&0.848&0.336&0.152&0.715&0.443&0.000&0.608&0.289&0.000&0.698&0.239&0.156\\
\beatgan&$\mathbf{0.746}$&$\underline{0.205}$&$\underline{0.137}$&$\underline{0.351}$&$\underline{0.889}$&$\underline{0.487}$&$\mathbf{0.239}$&0.751&0.503&0.060&0.720&$\underline{0.373}$&0.004&0.819&0.356&0.232\\
\lstmvae&$\underline{0.739}$&0.141&0.113&0.217&0.864&0.359&0.185&$\underline{0.899}$&$\underline{0.612}$&0.076&0.709&0.329&0.003&0.711&0.223&0.190\\
\usad&0.674&0.156&0.129&0.272&$\underline{0.889}$&0.445&0.227&0.830&0.571&0.075&0.612&0.305&0.000&0.508&0.145&0.088\\
\at&0.704&0.132&0.103&0.202&0.881&0.347&0.157&0.872&0.557&0.051&0.710&0.291&0.005&0.861&$\underline{0.420}$&0.252\\
\tranad&0.579&0.072&0.059&0.157&0.762&0.230&0.160&0.774&0.496&0.061&0.589&0.270&0.000&0.636&0.194&0.144\\
\dada&0.585&0.028&0.015&0.036&0.839&0.255&0.161&0.860&0.548&0.095&0.745&0.309&$\underline{0.007}$&$\underline{0.875}$&0.419&$\underline{0.262}$\\
\diffad&0.605&0.063&0.039&0.095&0.859&0.395&0.191&0.844&0.520&0.077&$\underline{0.768}$&0.346&0.004&0.606&0.169&0.157\\
\imdiffusion&0.630&0.047&0.028&0.060&0.866&0.358&0.126&0.860&0.552&0.062&0.725&0.300&0.002&0.850&0.341&0.196\\
\dddr&0.580&0.071&0.060&0.157&0.839&0.396&0.213&0.839&0.560&$\underline{0.099}$&0.668&0.315&0.000&0.828&0.330&0.235\\
\textbf{\method}&0.726&$\mathbf{0.238}$&$\mathbf{0.204}$&$\mathbf{0.399}$&$\mathbf{0.932}$&$\mathbf{0.597}$&$\underline{0.230}$&$\mathbf{0.900}$&$\mathbf{0.625}$&$\mathbf{0.109}$&$\mathbf{0.793}$&$\mathbf{0.416}$&$\mathbf{0.011}$&$\mathbf{0.891}$&$\mathbf{0.477}$&$\mathbf{0.273}$\\
\bottomrule
\end{tabular}
}
    \caption{
    Anomaly detection performance.
    V-R and V-P are VUS-ROC and VUS-PR~\cite{vus}.
    RF is Range F-score~\cite{rf}.
    Best results are in \textbf{bold}, and second-best results are \underline{underlined} (higher is better).
    }
    \label{table:mainresult}
\end{table*}

\begin{table*}[ht]
    \centering
    \resizebox{1.0\linewidth}{!}{
  \begin{tabular}{l|llll|lll|lll|lll|lll}
  \toprule
   & \multicolumn{4}{c|}{UCR} & \multicolumn{3}{c|}{AIOps} & \multicolumn{3}{c|}{Yahoo real} & \multicolumn{3}{c|}{Yahoo bench} & \multicolumn{3}{c}{SMD} \\
   & V-R & V-P & RF & Acc & V-R & V-P & RF & V-R & V-P & RF & V-R & V-P & RF & V-R & V-P & RF \\
  \midrule
\dae&$\mathbf{0.760}$&$\mathbf{0.252}$&$\underline{0.190}$&$\underline{0.386}$&0.900&0.501&$\mathbf{0.266}$&0.888&0.614&$\underline{0.083}$&0.694&0.310&0.003&0.866&0.435&0.238\\
\dae w/ M&0.685&0.149&0.125&0.266&0.890&0.495&0.243&0.894&0.606&0.080&0.688&0.286&0.001&$\underline{0.878}$&0.452&0.257\\
\dae w/ N&$\underline{0.758}$&0.234&0.173&0.382&0.884&0.448&$\underline{0.257}$&0.887&0.593&0.073&0.687&0.284&0.001&0.875&$\underline{0.456}$&0.255\\
\dae w/ M\&N&0.693&0.172&0.144&0.291&$\underline{0.915}$&$\underline{0.519}$&0.242&$\mathbf{0.905}$&$\underline{0.624}$&0.081&0.696&$\underline{0.316}$&$\underline{0.004}$&0.876&0.439&0.265\\
\ddpm&0.660&0.071&0.054&0.110&0.761&0.183&0.100&0.865&0.550&0.064&0.714&0.280&0.001&0.786&0.337&$\underline{0.267}$\\
\ddpm w/ M&0.660&0.062&0.046&0.097&0.713&0.162&0.079&0.843&0.508&0.074&0.717&0.268&0.000&0.751&0.286&0.228\\
\ddpm w/ N&0.709&0.165&0.127&0.272&0.827&0.323&0.204&0.896&0.613&$\underline{0.083}$&$\underline{0.727}$&0.313&0.000&0.812&0.334&0.233\\
\midrule
\hspace{-.8em}\begin{tabular}{l}\textbf{\method}\\(\ddpm w/ M\&N)\end{tabular}&0.726&$\underline{0.238}$&$\mathbf{0.204}$&$\mathbf{0.399}$&$\mathbf{0.932}$&$\mathbf{0.597}$&0.230&$\underline{0.900}$&$\mathbf{0.625}$&$\mathbf{0.109}$&$\mathbf{0.793}$&$\mathbf{0.416}$&$\mathbf{0.011}$&$\mathbf{0.891}$&$\mathbf{0.477}$&$\mathbf{0.273}$\\
  \bottomrule
\end{tabular}
}
    \caption{
    Ablation study with different components.
    M and N refer to Bernoulli mask and Noiseless inference, respectively.
    }
    \label{table:mainablation}
\end{table*}
\subsubsection{Baselines} 
We compare our method with the following 13 baselines in four categories, including classical methods and state-of-the-art reconstruction-based methods, as follows:
(1) Classical machine-learning methods, \isf~\cite{isolationforest} and \ocsvm~\cite{ocsvm},
(2) GAN, VAE, and AE-based methods, \beatgan~\cite{beatgan}, \lstmvae~\cite{lstmvae}, and \usad~\cite{usad},
(3) Transformer-based methods, \at\cite{anomalytransformer}, \tranad~\cite{tranad}, and \dada~\cite{dada},
and (4) Denoising-based methods, \diffad~\cite{diffad}, \imdiffusion~\cite{imdiffusion}, \dddr~\cite{dddr}, \dae~\cite{dae} and \ddpm~\cite{ddpm}.
We position \dae and DDPM as part of our ablation study, since they share the same architecture as \method.

\subsubsection{Evaluation Metrics} 
It is claimed that the traditional metric for TSAD, namely the F1 score calculated with a protocol named point adjustment (PA), overestimates the model performance~\cite{towards,quovadisTAD}.
Thus, we adopt various threshold-independent measures along with conventional metrics (AUC-ROC and F1 score) for a comprehensive comparison.
Specifically, we use the following five range-based measures, which are designed to provide a robust and suitable assessment for TSAD.
Range F-score~\cite{rf}, Range-AUC-ROC, Range-AUC-PR, volume under the surface (VUS)-ROC, and VUS-PR~\cite{vus}.
In particular, VUS-PR is identified as providing the most reliable and accurate measurement~\cite{elephant}. 
We set the buffer region for Range-AUC-ROC/PR at half of the window size.
For UCR, we also employ accuracy as a metric, where we consider that the model detects the anomaly if the highest anomaly score is located inside the given anomaly range, since it only has a single anomaly in a sequence.

\subsection{Anomaly Detection Performance} \label{sec:results}

\tabl{\ref{table:mainresult}} presents the performance of \method and the baseline methods for five datasets.
While the second-best method varies depending on the dataset and metric, the proposed \method achieves the highest detection accuracy across most of the datasets,
demonstrating an average improvement of $4.1\%$ in VOC-ROC, $14.4\%$ in VOC-PR, and $29.0\%$ in Range F-score compared with the second-best method.
Overall, reconstruction-based methods exceed classic machine-learning methods.
\beatgan struggles with Yahoo, which has limited training data, and SMD, which is multivariate data.
In contrast, \at performs well on SMD, but tends to overfit on simple patterns, as shown in \fig{\ref{fig:comparison}}, and underperforms on UCR.
These results highlight the difficulty of appropriately balancing the information bottleneck across diverse datasets.
\dada is a zero-shot anomaly detection method that identifies relative anomalies within a window.
However, when most values in a window are anomalous (as in UCR), it fails to detect them accurately, resulting in lower performance.
Diffusion-based methods, \diffad and \imdiffusion, face challenges in UCR, which contains many pattern-wise anomalies, reflecting limitations in the reconstruction quality.
These results demonstrate the effectiveness of \method.

\begin{table*}[ht]
    \centering
    \resizebox{1.0\linewidth}{!}{
  \begin{tabular}{l|rrr|rrr|rrr|rrr|rrr}
  \toprule
   & \multicolumn{3}{c|}{UCR} & \multicolumn{3}{c|}{AIOps} & \multicolumn{3}{c|}{Yahoo real} & \multicolumn{3}{c|}{Yahoo bench} & \multicolumn{3}{c}{SMD} \\
   & $MSE_{a}$ & $MSE_{n}$ & $\frac{MSE_a}{MSE_n}$ & $MSE_{a}$ & $MSE_{n}$ & $\frac{MSE_a}{MSE_n}$ & $MSE_{a}$ & $MSE_{n}$ & $\frac{MSE_a}{MSE_n}$ & $MSE_{a}$ & $MSE_{n}$ & $\frac{MSE_a}{MSE_n}$ & $MSE_{a}$ & $MSE_{n}$ & $\frac{MSE_a}{MSE_n}$ \\
  \midrule
\beatgan&0.0153&0.0030&5.1&0.0251&0.0005&50.6&9.6410&0.3539&27.2&0.0660&0.0296&2.2&0.0257&0.0053&4.8\\
\lstmvae&0.0171&0.0030&5.7&0.0361&0.0014&25.5&10.6639&0.3497&30.5&$\underline{0.1656}$&0.1131&1.5&$\underline{0.0387}$&0.0130&3.0\\
\usad&0.0394&0.0152&2.6&$\underline{0.0715}$&0.0068&10.5&$\underline{12.6297}$&0.7915&16.0&0.1593&0.1265&1.3&0.0036&0.0015&2.3\\
\at&0.0039&$\mathbf{0.0004}$&$\underline{10.3}$&0.0091&$\underline{0.0002}$&50.7&9.6449&0.2966&32.5&0.0266&0.0133&2.0&0.0239&0.0042&5.6\\
\tranad&0.0447&0.0328&1.4&$\mathbf{0.0892}$&0.0187&4.8&$\mathbf{12.6501}$&0.8379&15.1&$\mathbf{0.2679}$&0.2464&1.1&0.0183&0.0032&5.8\\
\dada&0.0090&0.0050&1.8&0.0072&0.0003&23.0&2.2592&$\underline{0.0204}$&$\mathbf{110.8}$&0.0195&$\mathbf{0.0059}$&$\underline{3.3}$&0.0029&$\mathbf{0.0006}$&4.8\\
\diffad&0.0373&0.0244&1.5&0.0436&0.0051&8.5&10.4784&0.2780&37.70&0.0844&0.0362&2.3&$\mathbf{0.0490}$&0.0314&1.6\\
\imdiffusion&$\underline{0.0532}$&0.0273&1.9&0.0501&0.0110&4.6&10.0372&0.2937&34.2&0.1038&0.0528&2.0&0.0150&0.0031&4.8\\
\dddr&$\mathbf{0.0540}$&0.0440&1.2&0.0476&0.0094&5.0&10.0916&0.1463&$\underline{69.0}$&0.0636&0.0252&2.5&0.0142&0.0028&5.0\\
\textbf{\method}&0.0108&$\underline{0.0008}$&$\mathbf{13.9}$&0.0344&$\mathbf{0.0001}$&$\mathbf{257.3}$&0.3551&$\mathbf{0.0075}$&47.2&0.0276&$\underline{0.0070}$&$\mathbf{4.0}$&0.0093&$\underline{0.0010}$&$\mathbf{9.5}$\\
\midrule
\dae&0.0103&0.0016&6.7&0.0374&0.0006&$\underline{66.7}$&10.3125&0.3423&30.1&0.0642&0.0382&1.7&0.0037&$\mathbf{0.0006}$&$\underline{6.0}$\\
\ddpm&0.0357&0.0170&2.1&0.0415&0.0053&7.8&0.5271&0.0716&7.4&0.0917&0.0472&1.9&0.0155&0.0053&3.0\\
DDPM w/ N&0.0261&0.0080&3.3&0.0392&0.0026&15.1&0.4989&0.0401&12.4&0.0757&0.0311&2.4&0.0139&0.0046&3.0\\
DDPM w/ M&0.0339&0.0201&1.7&0.0472&0.0111&4.3&0.4564&0.0690&6.6&0.0837&0.0511&1.6&0.0250&0.0169&1.5\\
\bottomrule
\end{tabular}
}
    \caption{
    Reconstruction quality performance.
    $MSE_a$ and $MSE_n$ are the MSE values of anomalous and normal parts during the inference.
    Not necessary, but $MSE_a$ should be higher, and $MSE_n$ should be lower.
    The higher the ratio $\frac{MSE_a}{MSE_n}$ is, the easier it is to detect anomalies.
    }
    \label{table:mseratio}
\end{table*}

\subsection{Ablation Studies} \label{sec:ablationstudy}
We assess the contribution of each component.
Qualitative analysis, hyperparameter studies, and case studies can be found in the Appendix.

\subsubsection{Effect of key components}
\tabl{\ref{table:mainablation}} shows quantitative results across datasets w/o each component with DAE and DDPM as a base method.
We can see that the noiseless inference enhances the performance of the vanilla DDPM. 
While the Bernoulli mask alone does not independently improve the accuracy of DDPM, its combination with noiseless, namely \method, leads to substantial performance gains.
This synergy underscores the importance of jointly employing these two simple components in improving anomaly detection capabilities.
The failure of the mask to work effectively on its own is because, in naive inference, noise is added at each denoising step, resulting in a mismatch with what the model has learned.
Interestingly, when DAE is used as the base method, neither the Bernoulli mask nor noiseless inference contributes to performance improvement.
This is probably because the direct prediction framework of DAE makes it challenging to learn a complex filter capable of both retaining and denoising, while DDPM has demonstrated remarkable capabilities in learning the filter.

\begin{figure}[t]
    \centering
    \begin{tabular}{cc} 
        \hspace{-1em}
        \includegraphics[height=9em]{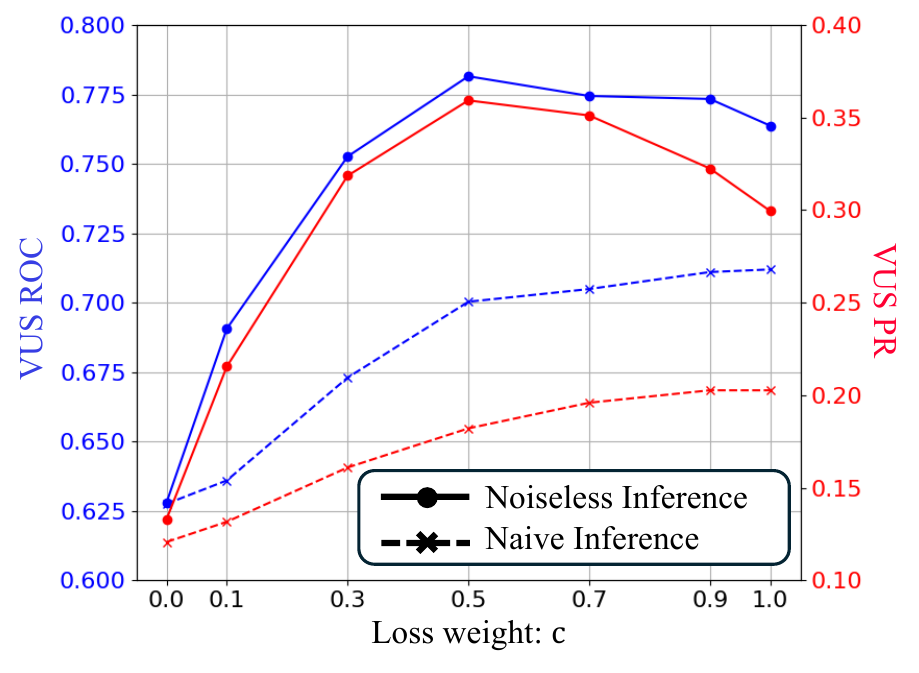} & 
        \hspace{-1em}
        \includegraphics[height=9em]{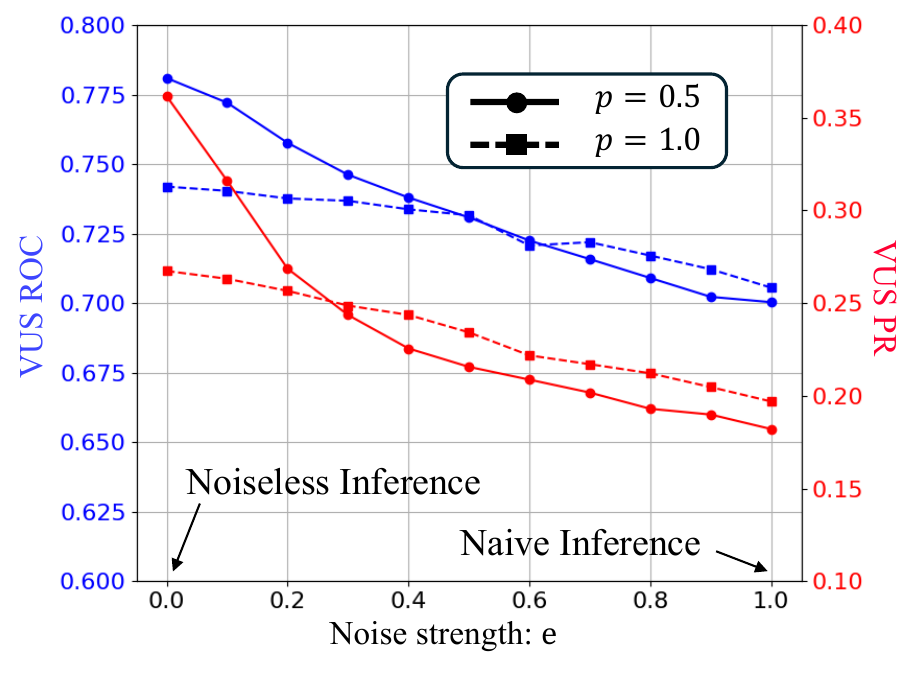} \\ 
        (a) Loss ratio $c$ & (b) Noise strength $\noiseweight$ \\ 
    \end{tabular}
    \caption{
    Effect of loss weights and noise strength.
    }
    \label{fig:ablation}
\end{figure}

\subsubsection{Effect of loss weights}
We investigate the effect of loss weights.
During our experiments, we give the same importance to each regularizer, i.e., the coefficients $\lossweight$ in 
$\lossfunc = \lossweight\lossfuncnonmask + (1-\lossweight)\lossfuncmask$
are set at 0.5.
Here, we vary $\lossweight$ from $\{0.1, \dots 1.0\}$, using $\bernoulliparam=0.5$ for training, where $\lossweight=1.0$ focused solely on denoising and $\lossweight=0.0$ focused solely on retention.
The results, shown in \fig{\ref{fig:ablation}} (a), indicate that $\lossweight=0.5$ achieved the best performance, aligning with $\bernoulliparam=0.5$ and suggesting that explicit weighting is unnecessary.
However, under naive inference, $\lossweight=1.0$ performed best, demonstrating that denoising is critical when inference includes noise, whereas both processes are significant when inference is noise-free.

\subsubsection{Effect of noise strength during inference}
We evaluate the effect of noiseless inference by modifying the noise added during naive inference (\alg{\ref{alg:naiveinference}}, Lines 3 and 6) from $\errort{t}$ to $\noiseweight\errort{t}$ and gradually reducing $\noiseweight$ from $\{1.0, \dots, 0.0\}$, even achieving noiseless inference.
This is equivalent to sampling Gaussian noise $\gaussnoise$ from $\gauss{\mathbf{0}, \noiseweight^{2}\idmat}$.
\fig{\ref{fig:ablation}} (b) shows the results for models trained with masked Gaussian noise ($\bernoulliparam=0.5$) and Gaussian noise ($\bernoulliparam=1.0$).
In both cases, the performance improved as $\noiseweight$ decreased, with the best results achieved at $\noiseweight=0.0$ (i.e., noiseless inference), indicating that noise is unnecessary for inference in TSAD.
The model trained on Gaussian noise outperformed masked Gaussian noise at around $\noiseweight \geq 0.4$, suggesting that models trained with Gaussian noise are more suitable when inference includes higher levels of noise, aligning with their training paradigm.

\subsection{Reconstruction Quality} \label{sec:reconst}
We show the MSE of anomalous and normal parts and their ratio in \tabl{\ref{table:mseratio}}.
Compared to baselines and ablation settings, \method achieves a significantly lower $MSE_{n}$, resulting in a larger ratio $\frac{MSE_a}{MSE_n}$.
This demonstrates that \method is well-suited for effective anomaly detection.
Although \at achieves a very small $MSE_{n}$ on UCR and AIOps, its $MSE_{a}$ is similarly small, resulting in a mediocre ratio.
As illustrated in \fig{\ref{fig:comparison}}, this causes false positives by even slight misalignments in normal regions.
\dada, a zero-shot anomaly detector, shows favorable ratios on datasets like Yahoo, where the number of training samples is limited and point anomalies are dominant, but struggles in UCR, which contains longer anomalous parts than the window size.
Diffusion-based methods such as \diffad, \imdiffusion, and DDPM exhibit high values in both $MSE_{a}$ and $MSE_{n}$, indicating the limitation of diffusion models.
DDPM w/ Mask yields results similar to the vanilla DDPM, suggesting that the Bernoulli mask by itself is not sufficient for improving detection ability.
In contrast, DDPM w/ Noiseless setting shows a notable reduction in $MSE_{n}$, leading to improved ratio.
These findings are consistent with the anomaly detection performance reported in \tabl{\ref{table:mainablation}} and further support the effectiveness of the proposed approach, particularly the combined contribution of the Bernoulli mask and noiseless inference.

\section{Conclusion}
    \label{070conclusions}
    We proposed a novel diffusion-based method called \method for unsupervised TSAD.
To resolve the low reconstruction quality of existing reconstruction-based methods, we built a selective filter that accurately denoises anomalous parts while retaining normal parts.
Specifically, we investigated the utility of noise in diffusion models and devised masked Gaussian noise and noiseless inference.
Extensive experiments on five datasets demonstrated the effectiveness of the method.
Ablation studies confirmed the contribution of our design to anomaly detection in a quantitative and qualitative manner and demonstrated that the synergy of the proposed two simple components strongly enhances vanilla DDPM.
Overall, we believe that our work paves the way for a new diffusion-based approach towards TSAD.

\section{Acknowledgments}
The authors would like to thank the anonymous referees for their valuable comments and helpful suggestions.
This work was supported by
JST CREST JPMJCR23M3,
JST START JPMJST2553,
JST CREST JPMJCR20C6,
JST K Program JPMJKP25Y6,
JST COI-NEXT JPMJPF2009,
JST COI-NEXT JPMJPF2115,
the Future Social Value Co-Creation Project - Osaka University.

\bibliography{ref_denoising}

\clearpage

\section{Proof} \label{apdx:lossproof}
\begin{prop}
    Assume we have a noise $\errort{\diffstep} = \mask\hadamard\error^{1} + (\bm{1}-\mask)\hadamard\error^{2}$,
    where $\error^{1}$ and $\error^{2}$ are sampled from some distributions, and $\mask$ is a mask with elements drawn from a Bernoulli distribution, $\mask_{k,l} \sim Bernoulli(\bernoulliparam)$.
    Then, the loss function \eq{\ref{eq:loss}} can be separated into two parts by the mask.
\end{prop}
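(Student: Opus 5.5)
The plan is to work entrywise, using the fact that $\mask$ is a binary (0/1) tensor. The norm in \eq{\ref{eq:loss}} is the Frobenius norm. The clean way to make the split exact is to read it as the squared Frobenius norm, which is what the simplified DDPM objective really is, so that it becomes a sum over entries $(k,l)$. With $\bm{1}$ the all-ones tensor, the binary property gives three identities: $\mask\hadamard\mask=\mask$, $(\bm{1}-\mask)\hadamard(\bm{1}-\mask)=\bm{1}-\mask$, and $\mask\hadamard(\bm{1}-\mask)=\bm{0}$.

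First, I will decompose the prediction using the partition of unity $\errornet{\mtsstep{\diffstep},\diffstep}=\mask\hadamard\errornet{\mtsstep{\diffstep},\diffstep}+(\bm{1}-\mask)\hadamard\errornet{\mtsstep{\diffstep},\diffstep}$. Substituting the assumed form of $\errort{\diffstep}$ then gives
\begin{align*}
\errort{\diffstep}-\errornet{\mtsstep{\diffstep},\diffstep}
=\mask\hadamard\big(\error^{1}-\errornet{\mtsstep{\diffstep},\diffstep}\big)
+(\bm{1}-\mask)\hadamard\big(\error^{2}-\errornet{\mtsstep{\diffstep},\diffstep}\big).
\end{align*}

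Next, I will take the squared Frobenius norm of this expression. The cross term is an entrywise sum of $\mask_{k,l}(1-\mask_{k,l})(\cdots)$, and every such coefficient is zero. Because $\mask_{k,l}^2=\mask_{k,l}$ and $(1-\mask_{k,l})^2=1-\mask_{k,l}$, what remains is
\begin{align*}
\lVert \errort{\diffstep}-\errornet{\mtsstep{\diffstep},\diffstep}\rVert^2
=\sum_{k,l}\mask_{k,l}\big(\error^{1}_{k,l}-\epsilon_{\theta,k,l}\big)^2
+\sum_{k,l}(1-\mask_{k,l})\big(\error^{2}_{k,l}-\epsilon_{\theta,k,l}\big)^2 .
\end{align*}
In the paper's shorthand these two sums are $\mask\hadamard\lVert\error^{1}-\errornet{\mtsstep{\diffstep},\diffstep}\rVert$ and $(\bm{1}-\mask)\hadamard\lVert\error^{2}-\errornet{\mtsstep{\diffstep},\diffstep}\rVert$, understood entrywise.

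Finally, I will take expectations over $\mtsstep{0}$, $\diffstep$, $\mask$, $\error^{1}$ and $\error^{2}$. Linearity of expectation splits the loss into the two terms, and no independence assumption is needed for this step. Specializing to $\error^{1}=\gaussnoise$ and $\error^{2}=\bm{0}$ recovers the decomposition $\lossfunc=\lossfuncnonmask+\lossfuncmask$ stated in Section~\ref{sec:training}.

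The only real obstacle is interpretive, not technical. With the unsquared Frobenius norm, the cross term still vanishes, but $\sqrt{a+b}\neq\sqrt a+\sqrt b$, so the exact additive split fails. I will therefore state explicitly that the loss is taken as the squared norm, as in standard DDPM practice, and note that under the unsquared norm the split holds only as an inequality or up to this convention. I also expect a brief remark that the masked entries of $\mtsstep{\diffstep}$ equal $\alphaschedulebart{\diffstep}\mtsstep{0}$ there. This is what makes the noiseless part a retention objective, but it is not needed for the separation itself.
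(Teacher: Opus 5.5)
Your proof is correct and is essentially the paper's argument: both partition the residual entrywise by the binary mask, using $\mask\hadamard\mask=\mask$, $(\bm{1}-\mask)\hadamard(\bm{1}-\mask)=\bm{1}-\mask$ and $\mask\hadamard(\bm{1}-\mask)=\bm{0}$, and then specialize to $\error^{1}=\gaussnoise$, $\error^{2}=\bm{0}$. The only differences are that the paper splits the norm over $\mask$ and $\bm{1}-\mask$ first and simplifies $\mask\hadamard\errort{\diffstep}$ afterwards, while you regroup the residual first and drop the cross term; that your reading of the norm as squared is exactly what the paper's first equality tacitly requires; and that you are right the independence of the mask entries, which the paper cites, plays no role.
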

\begin{prf}
Each element of the mask is independent.
Hence,
\begin{align}
    \lossfunc 
    &= \expectation{\lVert \errort{\diffstep} - \errornet{\mtsstep{\diffstep} ,\diffstep} \rVert} \\ \nonumber
    &= \expectation{\lVert \mask\hadamard\errort{\diffstep} - \mask\hadamard\errornet{\mtsstep{\diffstep} ,\diffstep} \rVert} \\ \nonumber
    &\;\;\;+ \expectation{\lVert (\bm{1}-\mask)\hadamard\errort{\diffstep} - (\bm{1}-\mask)\hadamard\errornet{\mtsstep{\diffstep},\diffstep} \rVert} \\ \nonumber
    &= \expectation{\lVert \mask\hadamard \mask\hadamard \error^{1} - \mask\hadamard\errornet{\mtsstep{\diffstep} ,\diffstep} \rVert} \\ \nonumber
    &\;\;\;+ \expectation{\lVert (\bm{1}-\mask)\hadamard (\bm{1}-\mask) \hadamard \error^{2} - (\bm{1}-\mask)\hadamard\errornet{\mtsstep{\diffstep},\diffstep} \rVert}  \\ \nonumber
    &= \expectation{\lVert \mask\hadamard\error^{1} - \mask\hadamard\errornet{\mtsstep{\diffstep} ,\diffstep} \rVert} \\ \nonumber
    &\;\;\;+ \expectation{\lVert (\bm{1}-\mask) \hadamard \error^{2} - (\bm{1}-\mask)\hadamard\errornet{\mtsstep{\diffstep},\diffstep} \rVert}   \\ \nonumber
    &= \expectation{\mask\hadamard \lVert \error^{1} - \errornet{\mtsstep{\diffstep} ,\diffstep} \rVert}
    + \expectation{(\bm{1}-\mask)\hadamard \lVert \error^{2} - \errornet{\mtsstep{\diffstep},\diffstep} \rVert} .  \\ \nonumber
\end{align}
\end{prf}

Our masked Gaussian noise is a special case of the above, where $\error^{1} = \gaussnoise \sim \gauss{\mathbf{0}, \idmat}$ and $\error^{2} = \mathbf{0}$ is deterministic.

Thus, we have
\begin{align}
    \lossfunc 
    &= \expectation{\lVert \errort{\diffstep} - \errornet{\mtsstep{\diffstep} , \diffstep} \rVert} \\ \nonumber
    &= \underbrace{ \expectation{\mask\hadamard \lVert \gaussnoise - \errornet{\mtsstep{\diffstep} ,\diffstep} \rVert} }_{\text{Noisy part}}
    + \underbrace{ \expectation{(\bm{1}-\mask)\hadamard \lVert \bm{0} - \errornet{\mtsstep{\diffstep},\diffstep} \rVert} }_{\text{Noiseless part}} \\ \nonumber
    &= \lossfuncnonmask + \lossfuncmask .
\end{align}

\section{Experiment Setup}

\subsection{Datasets} \label{apdx:datasets}
\begin{table}[t]
    \centering
    \resizebox{1.0\linewidth}{!}{
    \begin{tabular}{l|l|l|l|l|l}
        \toprule
         & \begin{tabular}{c}UCR\end{tabular} & \begin{tabular}{c}AIOps\end{tabular} & \begin{tabular}{c}Yahoo\\real\end{tabular} & \begin{tabular}{c}Yahoo\\bench\end{tabular} & \begin{tabular}{c}SMD\end{tabular} \\
        \midrule
        \# of Subdatasets & 250 & 29 & 37 & 181 & 28 \\
        \# of Features & 1 & 1 & 1 & 1 & 38 \\
        Domain & Various & Cloud KPIs & Web traffic & Synthetic & Server \\
        \multicolumn{6}{c}{\textit{Train (Average per subdataset)}} \\
        Sequence Length & 30522 & 103588 & 860 & 1008 & 25300 \\
        \# of Anomalies & 0.00 & 42.35 & 0.84 & 5.53 & - \\
        Anomaly Ratio (\%)& 0.00 & 2.49 & 0.87 & 0.55 & - \\
        \multicolumn{6}{c}{\textit{Test (Average per subdataset)}} \\
        Seqence Length & 56173 & 100649 & 573 & 672 & 25300 \\
        \# of Anomalies & 1.00 & 50.69 & 2.49 & 3.97 & 11.68 \\
        Anomaly Ratio (\%)& 0.85 & 1.81 & 4.58 & 0.59 & 4.21 \\
        \bottomrule
    \end{tabular}
    }
    \caption{Statistics of the datasets.}
    \label{table:dataset}
\end{table}

We provide a detailed description of the datasets and their statistics (\tabl{\ref{table:dataset}}).

\begin{itemize}
    \item UCR anomaly archive (UCR)~\cite{current}:
    contains 250 diverse univariate time series from various domains.
    According to \cite{modelselection}, the datasets include nine domains,
    (1) Acceleration, (2) Air Temperature, (3) Arterial Blood Pressure (ABP), (4) Electrical Penetration Graph (EPG), (5) Electrocardiogram (ECG), (6) Gait, (7) NASA, (8) Power Demand, and (9) Respiration (RESP).
    There is only one well-checked anomaly in each sequence, which overcomes some of the flaws in previously used benchmarks.

    \item AIOps
    \footnote{\url{https://competition.aiops-challenge.com/home/competition/1484452272200032281}}:
    comprises 29 univariate time series of Key Performance Indicators (KPIs), which are monitored to provide a stable web service from prominent Internet companies.

    \item Yahoo real and bench
    \footnote{\url{https://webscope.sandbox.yahoo.com/catalog.php?datatype=s&did=70}}:
    are provided as part of the Yahoo! Webscope program.
    Yahoo real contains real production traffic to some of the Yahoo! properties, and Yahoo bench contains synthetic time series, including components such as trend and seasonality.

    \item Server Machine Dataset (\smd)~\cite{omnianomaly}:
    comprises a multivariate time series with 38 features from 28 server machines collected from a large internet company over a period of five weeks.
    Both the training and test sets contain around 50k timestamps with $5\%$ anomalous cases.
    Although the use of \smd has generated both support~\cite{dghl,modelselection} and opposition~\cite{current}, we employ this dataset because of its multivariate nature.

\end{itemize}

\subsection{Baselines} \label{apdx:baselines}
We detail the baselines below.
\begin{itemize}
    \item \isf~\cite{isolationforest}: constructs binary trees based on random space splitting.
    The nodes with shorter path lengths to the root are more likely to be anomalies. 
    \item \ocsvm~\cite{ocsvm}: uses the smallest possible hypersphere to encompass instances.
    The distance from the hypersphere to data located outside the hypersphere is used to determine whether or not the data are anomalous. 
    \item \beatgan~\cite{beatgan}: is based on a GAN framework where reconstructions produced by the generator are regularized by the discriminator instead of fixed reconstruction loss functions.
    \item \lstmvae~\cite{lstmvae}: is an LSTM-based VAE capturing complex temporal dependencies that models the data-generating process from the latent space to the observed space, which is trained using variational techniques.
    \item \usad~\cite{usad}: presents an autoencoder architecture whose adversarial-style learning is also inspired by GAN. 
    \item \at~\cite{anomalytransformer}: combines series and prior association to make anomalies distinctive. 
    \item \tranad~\cite{tranad}: is a Transformer-based model that uses attention-based sequence encoders to perform inferences with broader temporal trends, focusing on score-based self-conditioning for robust multi-modal feature extraction and adversarial training. 
    \item \dada~\cite{dada}: is a zero-shot anomaly detector designed for multi-domain time series. It features Adaptive Bottlenecks for flexible latent space compression based on data characteristics, and Dual Adversarial Decoders that explicitly distinguish normal from anomalous patterns through adversarial training. 
    \item \diffad~\cite{diffad}: is a conditional diffusion model that employs a density ratio-based strategy as a conditioner.
    Its conditioner is based on density-based selection, where only the high-density (i.e., normal) parts of the sample are provided to the model.
    \item \imdiffusion~\cite{imdiffusion}: integrates the imputation method with a grating masking strategy and conditional diffusion model.
    It masks half of a sequence and uses the unmasked portion as the conditioner to reconstruct the masked part.
    Then, it reverses the mask and reconstructs the entire sequence.
    The reconstruction is gradually generated from white noise through the denoising process.
    \item \dddr~\cite{dddr}: is a denoising model that tackles the drift via decomposition and reconstruction, overcoming the limitation of the local sliding window.
    It first extracts stable components by decomposition.
    Then, it disturbs the stable components by adding noise and learns denoising.
    The reconstruction is conducted at 1 step, similar to \dae.
    \item \dae~\cite{dae}: corrupts the sample with $\mtsreconst = \mts + \alpha\error$, where we use $\alpha = 0.1, \error \sim \gauss{\mathbf{0}, \idmat}$, and directly predicts the clean sample $\mts$.
    We use the same hyperparameters and model architecture with \method.
    \item DDPM~\cite{ddpm}: trains the model by \alg{\ref{alg:training}} and reconstructs learned samples from corrupted data by \alg{\ref{alg:naiveinference}}.
    We use the same hyperparameters and model architecture with \method.
\end{itemize}

To ensure a fair comparison of reconstruction quality, all reconstruction-based methods used in this study employed MSE as the anomaly score.
To handle sequential information, we then apply the moving average of half of the window size (50 timesteps) to the score.

\subsection{Implementation Details} \label{sec:imple}
\begin{table}[t]
    \centering
    \begin{tabular}{l|p{4cm}}
        \toprule
        Design Parameter & Value \\
        \midrule
        \multicolumn{2}{c}{\textit{Architecture}} \\
        Attention heads & 8 \\
        \# of residual blocks $N$ & 8 \\
        Latent dimension $\latentdim$ & 64\\
        \midrule
        \multicolumn{2}{c}{\textit{Diffusion model}} \\
        Forward diffusion step $\forwarddiffstep$ & 50 \\
        Reverse diffusion step $\reversediffstep$ & 50 \\
        Noise scheduling $\varschedule$ & np.linspace($\varschedule_{start}$, $\varschedule_{end}$, $\forwarddiffstep$) \\ 
        Minimum noise $\varschedule_{start}$ & 0.0001  \\
        Maximum noise $\varschedule_{end}$ & 0.01  \\
        Bernoulli parameter $\bernoulliparam$ & 0.5 \\
        \bottomrule
    \end{tabular}
    \caption{
    Key hyperparameters for \method.
    }
    \label{table:hyperparameter}
\end{table}

We train the proposed model using AdamW with a learning rate of $10^{-3}$.
The batch size is 64.
We use $10\%$ of the training samples as validation and apply early stopping for a maximum of 100 epochs.
Other hyperparameters related to the framework are listed in \tabl{\ref{table:hyperparameter}}.
Due to the out-of-memory issue, we used $N=4, \latentdim=32$ for SMD.
The baselines are implemented based on previously reported hyperparameters and are trained with early stopping using a procedure similar to \method.
The window size is 100 for all datasets.

\section{Experimental Results}

\subsection{Results of Various Metrics} \label{apdx:results}
\begin{table*}[ht]
    \centering
    \resizebox{1.0\linewidth}{!}{
  \begin{tabular}{l|llll|llll|llll|llll|llll}
  \toprule
   & \multicolumn{4}{c|}{UCR} & \multicolumn{4}{c|}{AIOps} & \multicolumn{4}{c|}{Yahoo real} & \multicolumn{4}{c|}{Yahoo bench} & \multicolumn{4}{c}{SMD} \\
   & R-R & R-P & A-R & F & R-R & R-P & A-R & F & R-R & R-P & A-R & F & R-R & R-P & A-R & F & R-R & R-P & A-R & F \\
  \midrule
\isf&0.653&0.114&0.613&0.057&0.825&0.217&0.824&0.187&0.813&0.595&0.601&0.000&0.705&0.487&0.502&0.000&0.764&0.275&0.799&0.171\\
\ocsvm&0.662&0.187&0.618&0.092&0.874&0.463&$\mathbf{0.866}$&0.220&0.841&0.722&0.633&0.000&0.710&0.516&0.523&0.000&0.727&0.274&0.801&0.254\\
\beatgan&$\mathbf{0.752}$&$\underline{0.209}$&$\mathbf{0.729}$&$\underline{0.158}$&$\underline{0.912}$&$\underline{0.594}$&0.814&0.214&0.852&0.723&0.655&0.063&0.799&$\underline{0.577}$&0.624&0.005&0.829&0.383&0.857&0.325\\
\lstmvae&$\underline{0.745}$&0.140&$\underline{0.719}$&0.117&0.881&0.419&0.830&0.197&$\mathbf{0.934}$&$\underline{0.782}$&$\mathbf{0.846}$&0.092&0.781&0.520&0.617&0.003&0.732&0.239&0.798&0.266\\
\usad&0.684&0.157&0.651&0.138&0.908&0.541&$\underline{0.862}$&0.221&0.892&0.758&0.771&0.081&0.695&0.485&0.527&0.000&0.550&0.183&0.583&0.118\\
\at&0.720&0.138&0.675&0.106&0.901&0.442&0.847&0.146&0.915&0.749&0.799&0.069&0.790&0.487&0.604&0.005&0.874&$\underline{0.457}$&$\mathbf{0.885}$&$\mathbf{0.342}$\\
\tranad&0.609&0.072&0.537&0.059&0.789&0.273&0.798&$\mathbf{0.261}$&0.844&0.666&0.712&0.068&0.678&0.449&0.502&0.000&0.693&0.259&0.604&0.159\\
\dada&0.610&0.031&0.542&0.016&0.863&0.306&0.764&0.156&0.911&0.727&0.768&$\underline{0.113}$&0.822&0.508&0.624&$\underline{0.006}$&$\underline{0.885}$&0.445&0.867&$\underline{0.326}$\\
\diffad&0.623&0.065&0.578&0.044&0.893&0.500&0.783&0.192&0.899&0.725&0.760&0.088&$\underline{0.838}$&0.560&$\underline{0.670}$&0.004&0.644&0.188&0.681&0.199\\
\imdiffusion&0.651&0.050&0.596&0.032&0.889&0.435&0.795&0.116&0.907&0.751&0.782&0.075&0.799&0.489&0.605&0.001&0.865&0.367&0.867&0.229\\
\dddr&0.610&0.074&0.534&0.061&0.871&0.492&0.804&0.199&0.902&0.767&0.764&0.109&0.765&0.507&0.543&0.000&0.846&0.368&0.833&0.321\\
\textbf{\method}&0.740&$\mathbf{0.245}$&0.697&$\mathbf{0.196}$&$\mathbf{0.943}$&$\mathbf{0.690}$&0.854&$\underline{0.227}$&$\underline{0.933}$&$\mathbf{0.813}$&$\underline{0.833}$&$\mathbf{0.128}$&$\mathbf{0.852}$&$\mathbf{0.621}$&$\mathbf{0.686}$&$\mathbf{0.013}$&$\mathbf{0.900}$&$\mathbf{0.508}$&$\underline{0.873}$&0.321\\
\bottomrule
\end{tabular}
}
    \caption{
    R-R, R-P, A-R, and F are Range-AUC-ROC, Range-AUC-PR~\cite{vus}, AUC-ROC, and F1. 
    }
    \label{table:raucresult}
\end{table*}

\begin{table*}[t]
    \centering
    \resizebox{1.0\linewidth}{!}{
  \begin{tabular}{l|llll|lll|lll|lll|lll}
  \toprule
   & \multicolumn{4}{c|}{UCR} & \multicolumn{3}{c|}{AIOps} & \multicolumn{3}{c|}{Yahoo real} & \multicolumn{3}{c|}{Yahoo bench} & \multicolumn{3}{c}{SMD} \\
   & V-R & V-P & RF & Acc. & V-R & V-P & RF & V-R & V-P & RF & V-R & V-P & RF & V-R & V-P & RF \\
  \midrule
\isf&0.262&0.201&0.125&0.425&0.103&0.125&0.077&0.220&0.193&0.000&0.195&0.176&0.000&0.145&0.189&0.104\\
\ocsvm&0.271&0.267&0.203&0.459&0.115&0.201&0.132&0.225&0.218&0.000&0.210&0.173&0.000&0.159&0.202&0.199\\
\beatgan&0.232&0.280&0.206&0.478&0.116&0.222&0.129&0.217&0.224&0.114&0.186&0.229&0.019&0.144&0.226&0.189\\
\lstmvae&0.208&0.231&0.186&0.412&0.117&0.207&0.125&0.125&0.269&0.161&0.195&0.223&0.021&0.147&0.181&0.210\\
\usad&0.235&0.257&0.233&0.446&0.123&0.235&0.145&0.196&0.267&0.148&0.235&0.214&0.000&0.201&0.135&0.123\\
\at&0.227&0.239&0.198&0.401&0.101&0.166&0.096&0.149&0.246&0.097&0.159&0.191&0.026&0.122&0.217&0.216\\
\tranad&0.215&0.165&0.177&0.364&0.149&0.177&0.136&0.219&0.272&0.139&0.221&0.183&0.008&0.228&0.172&0.123\\
\dada&0.200&0.065&0.053&0.187&0.117&0.179&0.090&0.173&0.295&0.192&0.151&0.195&0.028&0.113&0.199&0.174\\
\diffad&0.244&0.155&0.111&0.294&0.134&0.196&0.130&0.174&0.272&0.169&0.139&0.204&0.027&0.168&0.139&0.139\\
\imdiffusion&0.203&0.121&0.089&0.238&0.128&0.202&0.144&0.172&0.256&0.143&0.162&0.198&0.014&0.122&0.215&0.211\\
\dddr&0.219&0.162&0.171&0.364&0.157&0.237&0.137&0.169&0.238&0.163&0.202&0.186&0.006&0.110&0.214&0.197\\
\textbf{\method}&0.234&0.308&0.267&0.490&0.098&0.166&0.100&0.143&0.237&0.179&0.162&0.229&0.029&0.112&0.227&0.194\\
\bottomrule
\end{tabular}
}
    \caption{
    Standard deviation of VUS-ROC, VUS-PR, and Range F-score.
    }
    \label{table:mainresultstd}
\end{table*}
\begin{figure}[t]
    \centering
    \begin{tabular}{c}
        \includegraphics[width=1\linewidth]{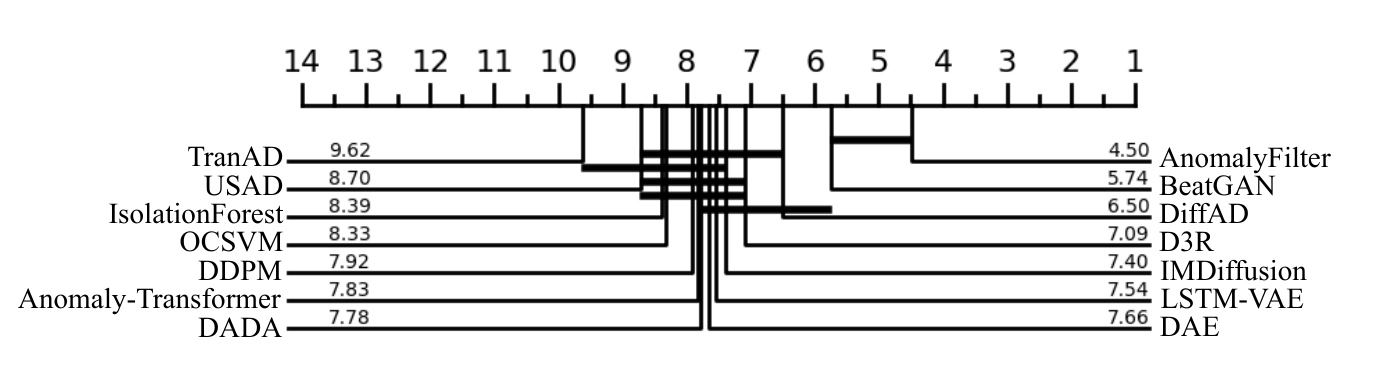}
    \end{tabular}
    \caption{
        Critical difference diagram of VUS-PR.
    }
    \label{fig:cd}
\end{figure}
We show the corresponding results of other range-based metrics, Range-AUC-ROC and Range-AUC-PR, in \tabl{\ref{table:raucresult}}, as well as standard deviation in \tabl{\ref{table:mainresultstd}}.
\fig{\ref{fig:cd}} shows the corresponding critical difference diagram of VUS-PR based on the Wilcoxon-Holm method~\cite{ismail2019deep}, where methods that are not connected by a bold line are significantly different in average rank.
Overall, we observe that \method outperforms the baselines for most datasets and metrics.

\subsection{Ablation Studies} \label{apdx:ablation}

\subsubsection{Qualitative analysis}

\begin{figure*}[t]
    \centering
    \includegraphics[width=1\linewidth]{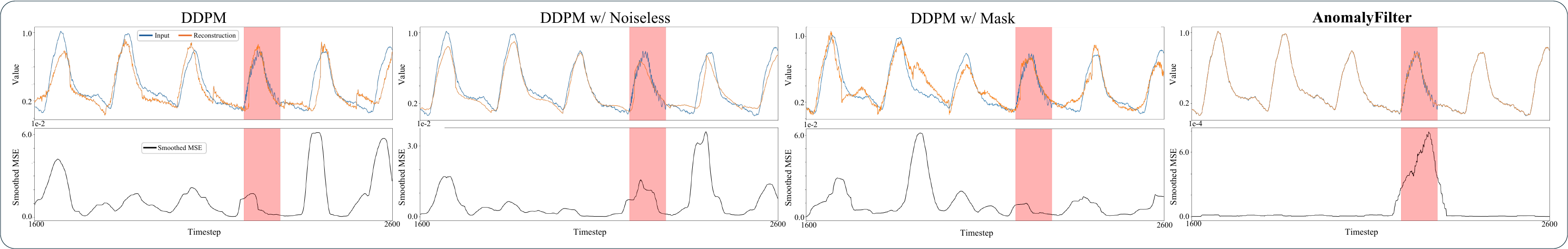} \\
    \caption{
        Reconstruction examples across DDPM variants of $\#029$ ABP on UCR. 
        }
    \label{fig:ablation_case029}
\end{figure*}
We visualize the reconstruction when adding each component to DDPM.
\fig{\ref{fig:ablation_case029}} show the results of DDPM with each component and \method of $\#029$ ABP data from UCR.
The area where it appears that the noise is added to the normal peak is the anomaly, indicated with a red background.
DDPM and DDPM w/ Mask struggle to accurately reconstruct the normal parts, leading to poor anomaly detection performance.
DDPM w/ Noiseless improves reconstruction of the normal parts compared with DDPM, but still exhibits significant deviations in certain areas.
In contrast, \method achieves notably low MSE for the normal parts, making anomalies stand out and enabling much more effective detection.

\subsection{Sensitivity Analysis} \label{sec:hyperparameter}

\begin{figure*}[t]
    \centering
    \begin{tabular}{cccc}
        \hspace{-2em}
        \begin{tabular}{c}
            \includegraphics[height=9em]{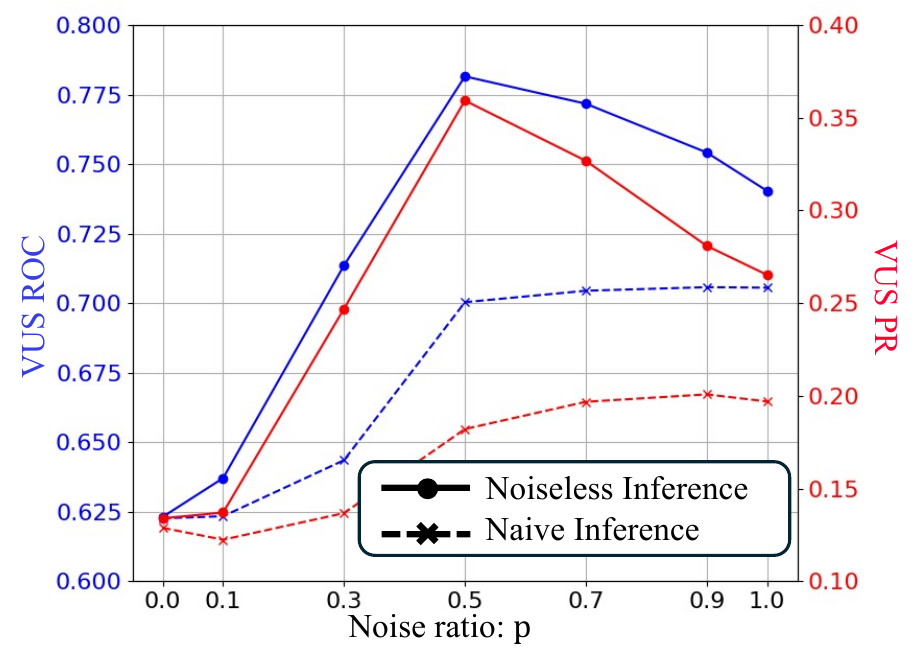} \\
            (a) Noise ratio $\bernoulliparam$
        \end{tabular} &
        \hspace{-2em}
        \begin{tabular}{c}
            \includegraphics[height=9em]{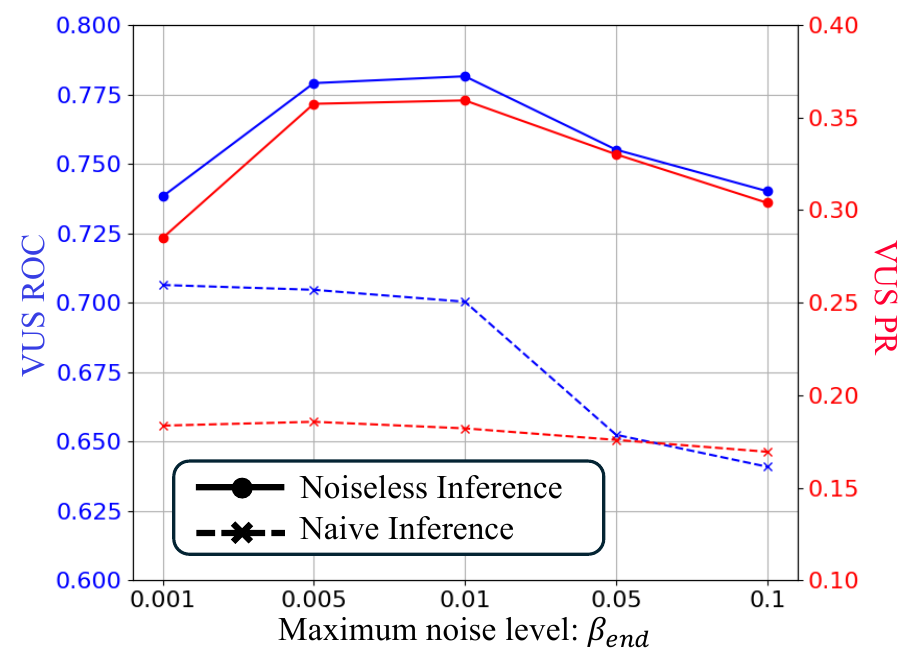} \\
            (b) Maximum noise $\varschedule_{end}$
        \end{tabular} &
        
        \hspace{-2em}
        \begin{tabular}{c}
            \includegraphics[height=9em]{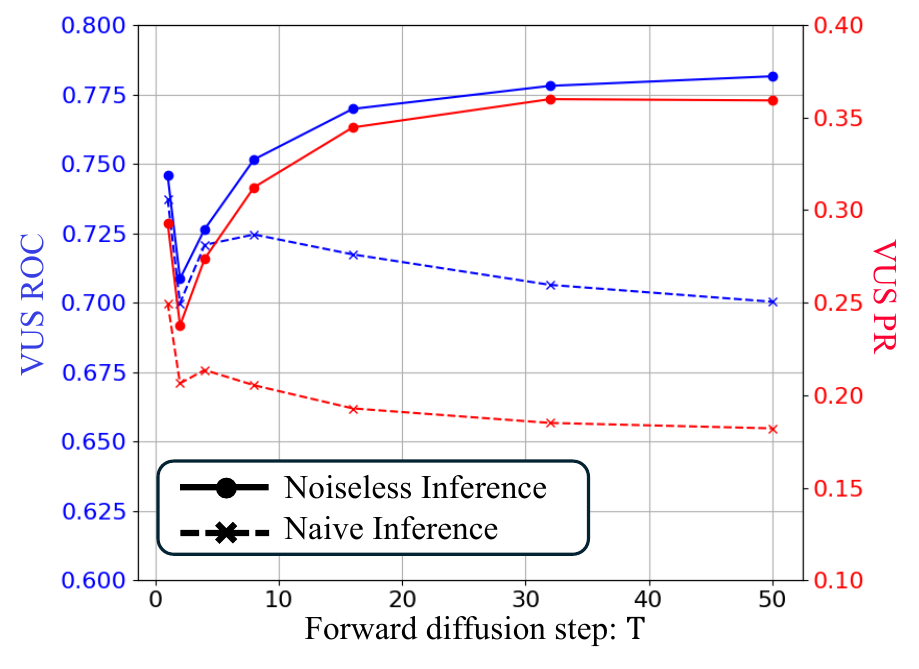} \\
            (c) Forward diffusion step $\forwarddiffstep$
        \end{tabular} &
        \hspace{-2em}
        \begin{tabular}{c}
            \includegraphics[height=9em]{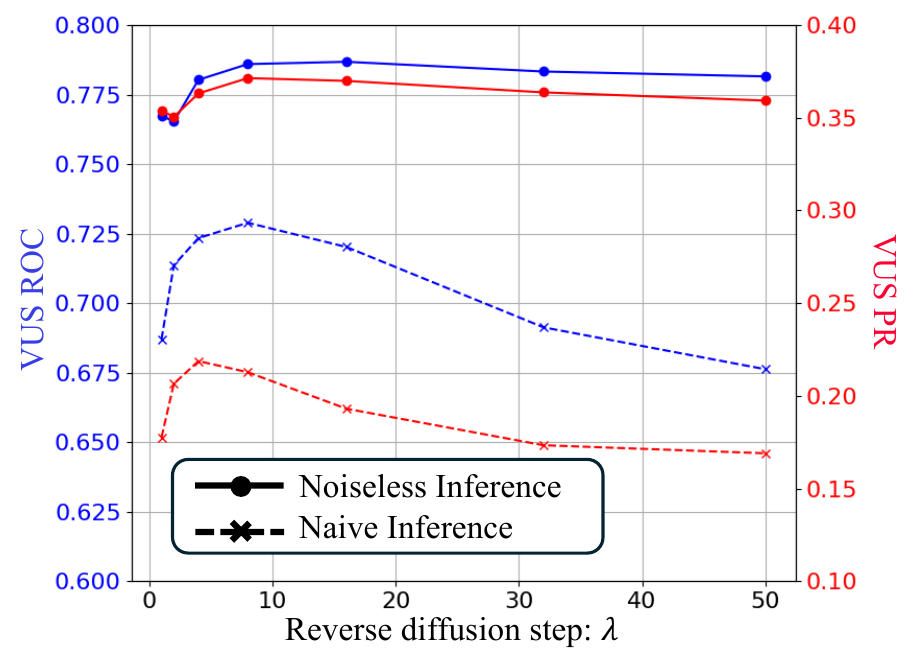} \\
            (d) Reverse diffusion step $\reversediffstep$
        \end{tabular} \\
    \end{tabular}
    \caption{
        Sensitivity analysis across four hyperparameters.
    }
    \label{fig:ablation_appendix}
\end{figure*}

We study the effect of key hyperparameters.
\fig{\ref{fig:ablation_appendix}} shows VUS-ROC and VUS-PR of the average of all datasets.

\subsubsection{Effect of ratio between noise and mask}
We investigate the effect of the Bernoulli mask by training models with masked Gaussian noise using different noise ratios $\bernoulliparam$.
\fig{\ref{fig:ablation_appendix}} (a) shows the results for both noiseless inference and naive inference. 
When $\bernoulliparam=0.0$, Gaussian noise is not added at all, and the model learns to scale the data based on the noise schedule, resulting in low accuracy with both inference procedures.
$\bernoulliparam=1.0$ indicates the addition of Gaussian noise.
For noiseless inference, the best performance was achieved at $\bernoulliparam=0.5$, demonstrating that effective learning of the filter requires balancing both the denoise and retain functionalities.
On the other hand, for naive inference, $\bernoulliparam=1.0$ yielded the best performance, indicating that reconstruction from corrupted data requires the model to be trained with noise that aligns closely with the inference conditions.

\subsubsection{Effect of noise schedule}
We use a linear variance schedule to define the noise schedule.
$\varschedule_{end}$ determines the maximum noise level, and we vary it across $\{0.001, 0.005, 0.01, 0.05, 0.1\}$.
The results in \fig{\ref{fig:ablation_appendix}} (b) indicate that an optimal level of noise is necessary for the best performance.

\subsubsection{Effect of forward diffusion step}
We vary the forward diffusion step $\forwarddiffstep$ across $\{1, 2, 4, 8, 16, 32, 50\}$, with the reverse diffusion step $\reversediffstep$ set at the same value as $\forwarddiffstep$.
Shorter steps lead to more rapid corruption.
When $\forwarddiffstep=1$, the model denoises in a single step, resembling DAE and achieving similar accuracy.
The results in \fig{\ref{fig:ablation_appendix}} (c) indicate that a $\forwarddiffstep \geq 32$ is sufficient for effective performance.
Shorter steps are preferable when using naive inference, suggesting that the denoising step should ideally operate in a noise-free setting.

\subsubsection{Effect of reverse diffusion step}
We fix the forward diffusion step $\forwarddiffstep=50$ and vary the reverse diffusion step $\reversediffstep$ across $\{1, 2, 4, 8, 16, 32, 50\}$.
As shown in \fig{\ref{fig:ablation_appendix}} (d) $\reversediffstep \geq 8$ is sufficient for effective performance.
With naive inference, the best accuracy is achieved at $\reversediffstep=8$, indicating that using a relatively small $\reversediffstep$ compared with $\forwarddiffstep$ improves performance.
However, the results also show that an overly short $\reversediffstep$ compared with $\forwarddiffstep$ hinders effective denoising, which requires careful tuning of $\reversediffstep$.

\subsection{Computational Time} \label{sec:time}
\begin{figure}[t]
    \centering
    \hspace{-2em}
    \begin{tabular}{cc} 
        \includegraphics[height=9em]{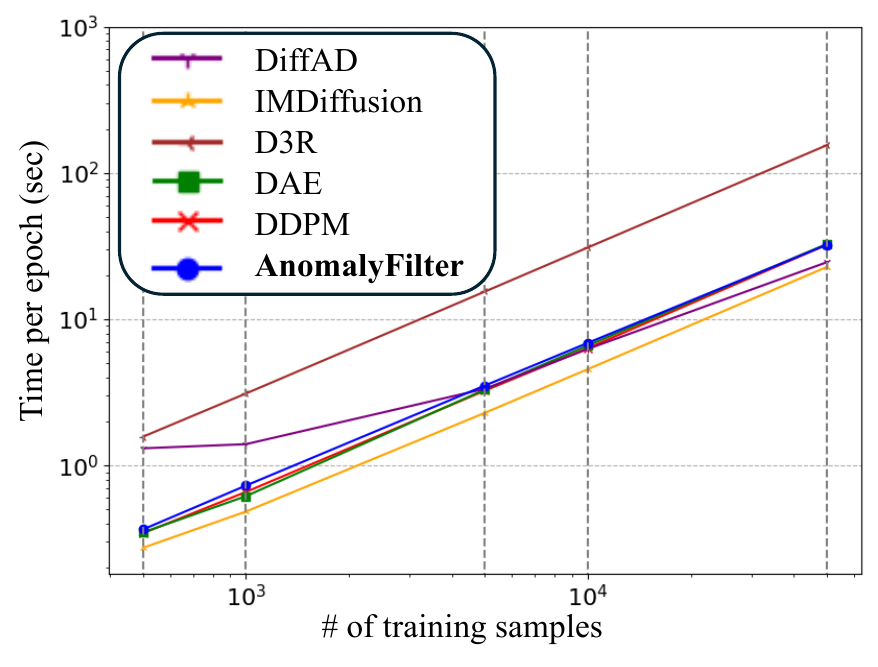} &
        \hspace{-1em}
        \includegraphics[height=9em]{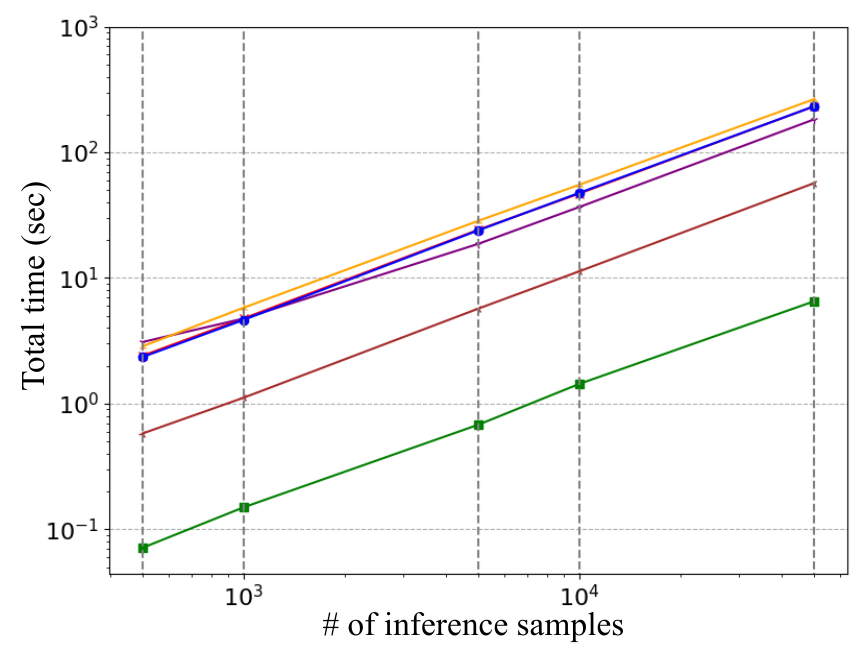} \\ 
        (a) Training & (b) Inference \\ 
    \end{tabular}
    \caption{
    Computational time for training and inference.
    }
    \label{fig:scalable}
\end{figure}

\begin{figure*}[t]
    \centering
    \includegraphics[width=1\linewidth]{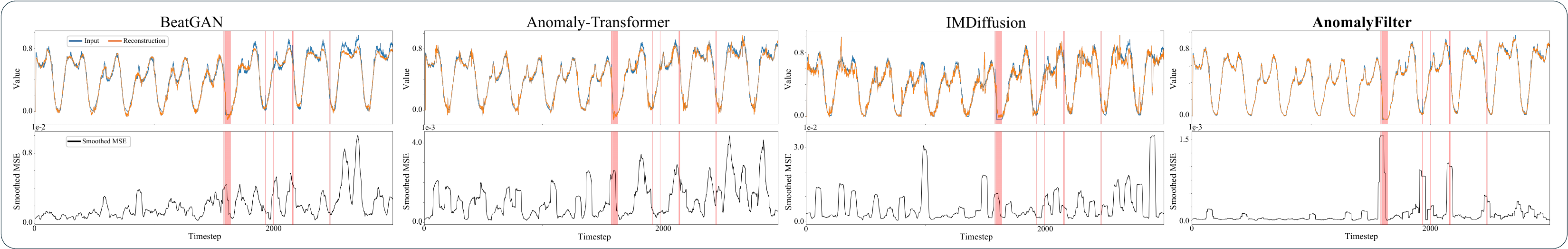} \\
    (a) AIOps KPI-301c \\
    \vspace{.5em}
    \includegraphics[width=1\linewidth]{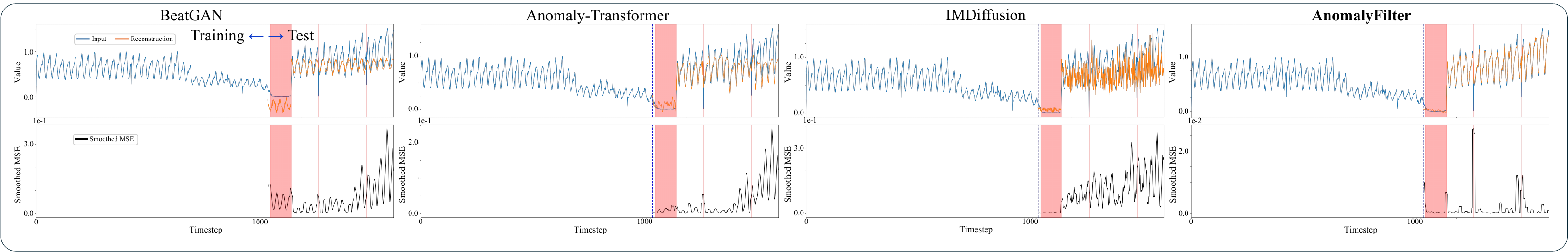} \\
    (b) Yahoo real A1real-28 \\
    \vspace{.5em}
    \includegraphics[width=1\linewidth]{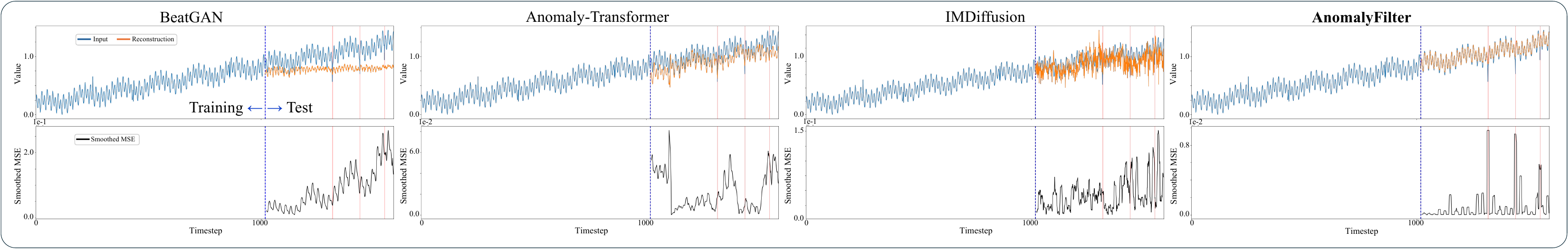} \\
    (c) Yahoo bench A3Benchmark-TS3 \\
    \vspace{.5em}
    \includegraphics[width=1\linewidth]{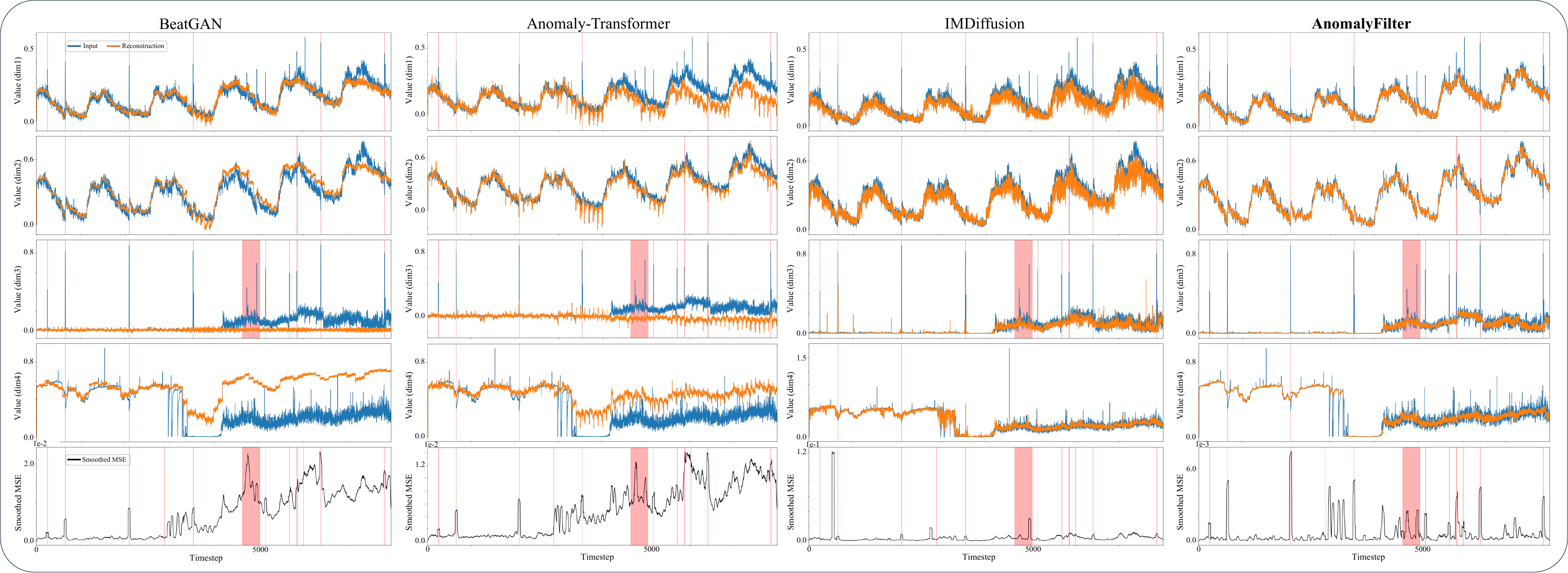} \\
    (d) SMD machine-1-6 \\
     \caption{
     Reconstruction examples across different methods of (a) AIOps, (b) Yahoo real, (c) Yahoo bench, and (d) SMD.
     }
    \label{fig:case_all}
\end{figure*}
We select six denoising-based methods to evaluate their scalability with respect to varying sample sizes during training and inference.
For diffusion models, we set forward and reverse diffusion steps to 50.
All methods are evaluated under the same NVIDIA RTX A6000.
As shown in \fig{\ref{fig:scalable}} (a), the training times are comparable across methods, except for \dddr.
During inference shown in \fig{\ref{fig:scalable}} (b), DAE and \dddr, which predict in a single step, are faster than the others.
Diffusion-based methods that require multiple reverse diffusion steps exhibit similar inference times.

\subsection{Case Studies} \label{sec:case}
We present further case studies in \fig{\ref{fig:case_all}} that visually show the difference between baselines and \method.

\subsubsection{AIOps KPI-301c}
This dataset consists of noisy sequences whose values fluctuate in a fine-grained manner, as shown in \fig{\ref{fig:case_all}} (a).
It includes different types of anomalies, such as a pattern anomaly at around timestep 1600 and many point anomalies. 
\method has a higher reconstruction error for all anomalies visualized in the figure, while accurately reconstructing normal parts.
In contrast, other methods fail to detect anomalies because their reconstruction error on normal parts is relatively larger than that of anomalous parts.

\subsubsection{Yahoo real A1real-28}
The Yahoo dataset provides limited training data, and some test data exhibit characteristics different from the training data.
As shown in \fig{\ref{fig:case_all}} (b), this sequence contains a cutoff anomaly shortly after the start of the test period (around timestep 1000), as well as two point anomalies.
Due to the limited amount of training data, all methods struggle to detect the cutoff anomaly.
Only \method has a high reconstruction error for the subsequent point anomalies.

\subsubsection{Yahoo bench A3Benchmark-TS3}
\fig{\ref{fig:case_all}} (c) is from Yahoo bench.
This dataset includes a trend component, causing the test values to fall outside the range of the training data.
\beatgan fails to shift the baseline of the reconstruction, resulting in a gradual drift away from the input.
Although \at and \imdiffusion handle the baseline shift of test data, their large reconstruction error leads to failures in anomaly detection.
In contrast, \method, due to its gradual denoising nature from the input, produces a reconstruction that closely matches the input and successfully detects the point anomalies.

\subsubsection{SMD machine-1-6}
We show the results of SMD on \fig{\ref{fig:case_all}} (d).
In the SMD dataset, each variable in the multivariate time series is individually labeled for anomalies; however, as visualized, the labeling accuracy is often low.
Therefore, we adopt a commonly used setting where a time point is considered anomalous if any variable exhibits an anomaly.
Because both \beatgan and \at reconstruct from the latent space, the reconstruction errors for dim3 and dim4 increase after timestep 4000, causing earlier anomalies to become less distinguishable.
Moreover, the reconstruction of dim2 for \beatgan and dim1 for \at is affected by dim3 and dim4, as these methods reconstruct the whole dimensions from the common latent vector.
In contrast, denoising-based methods, which denoise from the input, maintain low reconstruction errors for dim3 and dim4, thereby avoiding such drifts.
Since \method selectively denoises only the anomalous parts, it is less susceptible to the influence of anomalies in other variables.

\section{More Related Work}

\subsubsection{Reconstruction-based Method for TSAD}
Inspired by the success of deep learning, numerous deep learning methods for TSAD have been proposed, with reconstruction-based methods constituting a significant proportion~\cite{survey_vldb}.
Reconstruction-based methods aim to detect anomalous behavior in an unsupervised manner by comparing the original input with the reconstruction.
They rely on the hypothesis that a model trained mostly on normal samples excels in normal parts but struggles in anomalous parts.
Deep generative techniques, including VAE~\cite{donut,interfusion} and GAN~\cite{tanogan}, are often utilized to build reconstruction-based methods.
For example, LSTM was incorporated into VAE to learn temporal features~\cite{lstmvae,omnianomaly}.
BeatGAN~\cite{beatgan}, proposed for detecting anomalies in ECG beats, is a GAN-based method that uses time series warping for data augmentation.
However, VAE and GAN are usually difficult to train because they require a careful balance between the two networks. 
Transformer architecture~\cite{dual-tf,anomalytransformer,dcdetector,subtransformer} has also been explored.
TranAD~\cite{tranad} introduces attention mechanisms from transformer models and incorporates adversarial training to jointly enhance the accuracy of anomaly detection.
These encoder-decoder models aim to learn the latent representation for the entire time series for data reconstruction and can be interpreted through the information bottleneck (IB) principle, namely the trade-off between concise representation and reconstruction accuracy~\cite{ib}.
However, in anomaly detection, controlling the IB is challenging due to the absence of anomaly knowledge and labels, often leading to underfitting (i.e., prioritizing conciseness) or overfitting (i.e., prioritizing reconstruction).

\section{Limitations and Future Directions}

\subsubsection{High-dimentional data}
Current benchmarks commonly used for multivariate time series anomaly detection assign the same label to all variables.
While SMD labels individual variables, their quality is low, as we observed clearly normal variables labeled as anomalies.
Therefore, models must predict anomalies if any variable in the multivariate data exhibits abnormal behavior.

Such labels mean that high-dimensional data pose challenges for \method, as the model learns noise that is independent across variables.
This leads to reconstructions where only the anomalous parts of the affected variables are denoised.
With MSE used as the anomaly score, the accumulation of errors across variables makes detecting anomalies increasingly difficult as the number of variables grows.
On the other hand, when using encoder-decoder models, an anomaly in one variable affects the entire latent state, thereby influencing the reconstruction of all variables.
Although they cannot pinpoint the specific anomalous variable, they are more suitable for the current labels.
Accordingly, future research could focus on developing noise mechanisms that account for inter-variable dependencies.

\subsubsection{Anomaly contamination}
In unsupervised anomaly detection, it is common to assume that all or most of the training samples are normal, which is known as the normality assumption~\cite{mitigatecontamination}.
However, in real-world scenarios, the presence of anomaly samples in the training set (i.e., anomaly contamination) is inevitable~\cite{redlamp}.
Under this assumption, reconstruction-based methods become vulnerable to contamination, as their strong representational power can lead to the accurate reconstruction of even anomalous patterns, thereby reducing performance.
As with most of the reconstruction-based methods, we adopt the normality assumption, making \method susceptible to this issue.
To the best of our knowledge, no diffusion-based approach has explicitly addressed anomaly contamination.
Handling anomaly contamination within a diffusion-based framework remains an open and promising research direction.

\end{document}